\newcounter{Vx}
\newcommand{\itemV}{%
    \addtocounter{Vx}{1}
    \item[V\theVx.]}
\newtheorem*{theorem*}{Theorem}
\newtheorem{theorem}{Theorem}
\newtheorem{lemma}[theorem]{Lemma}
\newtheorem{remark}{Remark}
\title{A New Approach to Drifting Games,
Based on Asymptotically Optimal Potentials}
\author{
Zhilei Wang\thanks{zhileiwang92@gmail.com} \qquad Robert V. Kohn\thanks{kohn@cims.nyu.edu; partial support from NSF grant DMS-2009746 is
gratefully acknowledged.}
}
\affil{Courant Institute of Mathematical Sciences, New York University}
\date{}
\begin{document}

\maketitle

\begin{abstract}
We develop a new approach to drifting games, a class of two-person games with many applications to boosting and online learning settings. Our approach involves (a) guessing an asymptotically optimal potential by solving an associated partial differential equation (PDE); then (b) justifying the guess, by proving upper and lower bounds on the final-time loss whose difference scales like a negative power of the number of time steps. The proofs of our potential-based upper bounds are elementary, using little more than Taylor expansion. The proofs of our potential-based lower bounds are also elementary, combining Taylor expansion with probabilistic or combinatorial arguments. Not only is our approach more elementary, but we give new potentials and derive corresponding upper and lower bounds that match each other in the asymptotic regime. 
\end{abstract}

\begin{keywords}
Drifting games, Boosting, Online learning algorithms, Potential-based bounds, Asymptotically optimal bounds, Partial differential equation
\end{keywords}

\section{Introduction}

This paper develops a fresh approach to the analysis of some drifting games. Our focus is on
the identification of asymptotically optimal potential-based strategies for some versions of this
repeated two-person game. Our approach involves (a) guessing an asymptotically optimal
potential by solving an associated PDE (which is in general highly nonlinear); then
(b) justifying the guess, by proving upper and lower bounds on the final-time loss whose
difference scales like a negative power of the number of time steps. Our upper bounds
are based on potential-based strategies for the player, and our lower bounds are similarly based
on strategies for the adversary. Their proofs are rather elementary, using
Taylor expansion and the explicit character of the potential.  Most previous work on asymptotically optimal strategies has used potentials obtained by solving a discrete dynamic programming principle, which is complicated and sometimes intractable. Our approach is facilitated by the fact that our potentials are explicit and the arguments are based on basic calculus. Not only is our approach more elementary, but we give new potentials and derive corresponding upper and lower bounds that match each other in the asymptotic regime. In particular, in Section \ref{Sec: continuous shifted action set} we give asymptotically optimal bounds for a drifting game where the adversary could move each chip in a continuous range $[-1,1]$, answering an open question from \cite{Schapire99driftinggames,schapire2001drifting}.

Drifting games are repeated
two-person games involving a {\it player} and an {\it adversary}, whose interaction governs the
positions of $N$ ``chips''. The game is determined by
\begin{enumerate}
\item[(i)] the number of chips $N$;
\item[(ii)] the permitted moves, a subset $\Z$ of the real line;
\item[(iii)] a nonnegative parameter $\delta$, whose role will be revealed in a moment;
\item[(iv)] the number of time steps $T$, and
\item[(v)] the loss function $L : \R \rightarrow \R^+\cup\{0\}$.
\end{enumerate}
When the game begins, all the chips are located at $0$. In each round of the game,
\begin{itemize}
\item the player announces a weight for each chip, i.e., a probability distribution $\p=(p_1, \ldots, p_N)$;
\item the adversary then moves each chip by $z_i$ subject to the restrictions that (a) for each
$i$, the displacement $z_i$ of the $i$th chip belongs to the set $\Z$, and (b) taken together, the
displacements satisfy $\sum_{i=1}^Np_i z_i \geq \delta$.
\end{itemize}
When the game stops (after $T$ time steps), the position $s_i$ of the $i$th chip is the sum of all its moves,
and the player's loss is $\frac{1}{N} \sum_{i=1}^N L(s_i)$. The player's goal is to minimize its loss, and
the adversary's goal is to maximize it.

We shall focus mainly on four versions of this game:
\begin{tcolorbox}
    \begin{enumerate}
        \itemV $Z = \{ \pm 1 \}$, $\delta \geq 0$, and $L(s) = \ind{s \leq 0}$;
        \itemV $Z = [-1,1]$, $\delta \geq 0$, and $L(s) = \ind{s \leq 0}$;
        \itemV $Z = \{ \pm 1 \}$, $\delta=0$, and $L(s) = \ind{s \leq -R}$ for some $R \geq 0$; and
        \itemV $Z = [-1,1]$, $\delta = 0$, and $L(s) = \ind{s \leq -R}$ for some $R \geq 0$.
    \end{enumerate}
\end{tcolorbox}


V1 and V3 are related to classical boosting and \emph{prediction with expert advice} (c.f. \cite{boost-by-majority,Schapire99driftinggames,schapire2001drifting,Boosting_book,Cesa-Bianchi96}); V4 is related to \emph{prediction with continuous experts} and \emph{hedge game} (see e.g.\cite{ContExperts,HedgeGame}). 

V2 is closely related to a ``continuous" boosting game where the weak learners are not binary: they give numbers between $[-1,1]$ for each sample point, which express not only their predictions but also their confidence. We provide bounds for V2 in Section \ref{Sec: continuous shifted action set}, and explain in Section \ref{subsec:continuous boosting game} how these bounds control the training error of the ``continuous" boosting game.

For V1 and V2, we shall assume $\delta \leq 1$ since each $|z_i| \leq 1$. Moreover, we always scale $\delta$ with $T$ so as to keep
\begin{equation} \label{eqn: defn-of-gamma, V1 and V2}
\gamma = \frac{\delta^2 T}{2}
\end{equation}
constant. This choice is required for the condition $\sum p_i z_i \geq \delta$ to be meaningful in the limit
$T \rightarrow \infty$; we shall briefly explain why in Section \ref{subsec:2.2}, and offer a different perspective in Appendix \ref{Sec: heuristic pde derivations}.

Similarly for V3 and V4, we scale $R$ so that 
\begin{equation} \label{eqn: defn-of-gamma, V3 and V4}
\gamma = \frac{R^2}{2T}\tag{1'}
\end{equation}
is a constant. This choice comes from the intuition that $R$ plays the role of $\delta T$ in V1 and V2.


We give a brief review of the literature. Drifting games were first introduced in \cite{Schapire99driftinggames,schapire2001drifting}, 
as an abstraction which generalizes the \emph{majority-vote
game} considered in \cite{boost-by-majority}. Connections to boosting are surveyed in \cite{Boosting_book}. A number of online learning problems can also be studied this way.
For example in \emph{prediction with expert advice} the binomial weights algorithm can be viewed as a
potential-based strategy for the player of V3 \cite{Schapire99driftinggames,schapire2001drifting} and a ``continuous variant" is studied using V4 in \cite{ContExperts}.
A general mechanism for the design of online learning algorithms based on drifting games is
proposed in \cite{HedgeGame}.


It is natural to ask: can we identify optimal strategies for the two players in a drifting game?
An affirmative answer based on dynamic programming was obtained in \cite{Schapire99driftinggames,schapire2001drifting}. In these papers, the player's strategy is given explicitly in terms of a
time-dependent potential that solves a dynamic programming principle. And for V1 and V3, the
associated optimal player strategies are actually ones that had already been considered in earlier work -- the ``boost-by-majority'' and ``binomial weights'' strategy. The adversary's strategy is not given explicitly; rather, its existence is proved by a probabilistic argument, provided that the number of chips is sufficiently large. To deal with a continuous version of \emph{prediction with expert advice}, \cite{ContExperts}
returned to game V4. For V4, the paper identified the potential and the associated optimal strategies, which involves a suitable truncation of the binomial weights algorithm. 

Our work is related to --  but different from -- the developments just summarized. Briefly, we offer a fresh approach to the identification of asymptotically
optimal strategies for the player and the adversary, which can be used even in cases like $\Z = [-1,1]$. In essence, our idea is to (a) guess a potential, by
solving a (nonlinear) PDE that emerges from scaling limit, then (b) show directly, by an argument
based on Taylor expansion combined with the minimax character of the game, that associated potential-based strategies for the player or the adversary are
asymptotically optimal in the limit $T \rightarrow \infty$. This idea is implemented here for the four versions
of the drifting game, and our main results are
\begin{theorem}\label{thm: main thm}
For V1 and V3 (with $\gamma$ defined as in Equation \eqref{eqn: defn-of-gamma, V1 and V2} and \eqref{eqn: defn-of-gamma, V3 and V4}, respectively) and any $\theta\in(0, \frac{2}{3})$, there exists a potential-based player strategy such that
\[
\frac{1}{N}\sum_{i=1}^N L(s_{i}) \leq \frac{1}{\sqrt{\pi}}\int^{\infty}_{\sqrt{\gamma}}e^{-x^2}dx+O(T^{-\frac{\theta}{4}})~; 
\]
when $N$ is sufficiently large (polynomial in $T$), for any player strategy there exists an adversary strategy such that
\[
\frac{1}{N}\sum_{I=1}^N L(s_{i}) \geq \frac{1}{\sqrt{\pi}}\int^{\infty}_{\sqrt{\gamma}}e^{-x^2}dx-O(T^{-\frac{\theta}{4}})~. 
\]

For V2 and V4 (with $\gamma$ defined as in Equation \eqref{eqn: defn-of-gamma, V1 and V2} and \eqref{eqn: defn-of-gamma, V3 and V4}, respectively) and any $\theta\in(0, \frac{2}{3})$, there exists a potential-based player strategy such that
\[
\frac{1}{N}\sum_{I=1}^N L(s_{i}) \leq \frac{2}{\sqrt{\pi}}\int^{\infty}_{\sqrt{\gamma}}e^{-x^2}dx+O(T^{-\frac{\theta}{2}})~; 
\]
when $N$ is sufficiently large (polynomial in $T$), for any player strategy there exists an adversary strategy such that
\[
\frac{1}{N}\sum_{I=1}^N L(s_{i}) \geq \frac{2}{\sqrt{\pi}}\int^{\infty}_{\sqrt{\gamma}}e^{-x^2}dx-O(T^{-\frac{\theta}{4}})~. 
\]
\end{theorem}

Note that as $T \rightarrow \infty$ and $N$ polynomial in $T$, the leading order term in the upper and lower bounds of above games coincide. Moreover, the leading order term for V2 and V4 is twice as the leading order term for V1 and V3, which is not surprising as the adversary has more choices in V2 and V4. The earlier work \cite{ContExperts} observed the same relation between game V3 and V4.

We are not the first to connect drifting games with the solutions of suitable PDEs. Indeed, the paper \cite{BrownBoost} found
an adaptive version of the boost-by-majority algorithm by considering the limit of the \emph{majority-vote game} when $\delta\rightarrow 0$, in other words, when the advantage of each vote over random
guessing decreases to zero while the number of boosting rounds goes to infinity. The paper found a
PDE that corresponds to this limit and named the algorithm Brownboost since the PDE is closely
related to Brownian motion with drift. Subsequently, \cite{FREUND2002113} observed that when
$\delta$ is small the solution of the dynamic programming principle (defined at discrete times using the minimax character of the game) has a particularly simple recursion form. Taking the scaling limit of the recursion formula leads to a PDE. Using the solutions of the PDE with different loss functions, this work
successfully recovered some known boosting algorithms and designed some new ones.
A nice summary can be found in Chapter 14 of \cite{Boosting_book}.

Our potentials are continuous-time limits of the 
discrete potentials found in \cite{Cesa-Bianchi96}, \cite{Schapire99driftinggames,schapire2001drifting} and \cite{ContExperts}. Our Taylor-expansion-based approach has, however, some advantages over the analyses in those papers; in
particular, since the potential is explicit, all its properties are immediately evident and the final loss is easy to characterize.
Similar
applications of Taylor expansion were used in recent papers on \emph{prediction with expert advice}
\cite{pmlr-v107-kobzar20a,pmlr-v125-kobzar20a}.

The character of our potentials plays the central role in our analysis. For V1, our potential has the form 
\begin{equation} \label{separable-potential}
\la(\s,t) = \frac{1}{N} \sum_{i=1}^N f(s_i, t)~.
\end{equation}
The function $f$ is determined by solving a linear heat equation in one space dimension
\begin{align}  \label{eqn: backward heat equation}
\begin{cases}
\partial_t f(s, t)+\frac{1}{2}f''(s, t) = 0\\
f(s, 0) = \ind{s \leq -\delta T}~,
\end{cases}
\end{align}
then introducing suitable shifts in space and time (see Section \ref{Sec: discrete action set}). While this potential is familiar from the literature on boosting, our use of it is different from 
what one finds there: we establish its asymptotic optimality not by finding optimal discrete-time 
potentials then taking a limit, but instead by a rather elementary Taylor-expansion-based argument. 

For V2 our potential still has the separable form \eqref{separable-potential}, but the linear PDE \eqref{eqn: backward heat equation} is replaced by the \emph{nonlinear} PDE 
\begin{align}\label{eqn: backward nonlinear equation}
\begin{cases}
\partial_t f(s, t)+\frac{1}{2}\max(f''(s, t), 0) = 0\\
f(s, 0) = \ind{s \leq -\delta T}~.
\end{cases}
\end{align}

The solution is again explicit, as we explain at the beginning of Section 
\ref{Sec: continuous shifted action set}. Once again, our potential is familiar: indeed, 
it is the scaling limit of the one
found in \cite{ContExperts}, which (as noted earlier) involves a suitable truncation of the binomial weights algorithm. However, our use of this
potential is different from that of \cite{ContExperts}.



The preceding discussion emphasized the use of Taylor expansion to assess the player's strategy by proving an upper bound on the final-time loss. The arguments used
to formulate and assess the adversary's strategy also use our potential and Taylor expansion, but they require some additional arguments to know the existence of a good choice of $\z$ for the adversary. In this area, we adopt methods that are already in the literature. For V1
we use a probabilistic argument similar to that of \cite{schapire2001drifting}. For V2,V3 and V4 a simpler argument is possible, by arguing as in \cite{ContExperts}.
In both cases, it is necessary to assume that the number of chips is large enough.

It is natural to ask \emph{why} the nonlinear PDE \eqref{eqn: backward nonlinear equation} is relevant when $\Z = [-1,1]$, and
to ask more generally how, in other settings, one might use a suitable PDE to guess a good potential (whose validity might then be confirmed using the
methods in this paper). This is addressed in Appendix \ref{Sec: heuristic pde derivations}. 
The discussion is heuristic; however the rest of our paper is entirely rigorous.

This paper is organized as follows: Section \ref{Sec: main ideas} discusses the ideas that
drive our analysis. Sections 
\ref{Sec: discrete action set}, \ref{Sec: continuous shifted action set} provide additional ideas and specific potentials used for games V1 and V2; games V3 and V4 are similar, so their discussion is postponed to Appendix \ref{Sec: discrete action set 2} and \ref{Appendix: continuous action set variant 2}. The careful 
statements of our rigorous results for the
four versions of the drifting game are given in Appendices \ref{Appendix: discrete action set}--\ref{Appendix: continuous action set variant 2} together with detailed proofs. Section 
\ref{Sec: heuristic pde derivations} provides a heuristic derivation of the nonlinear PDE that
conjecturally describes the scaling limit of a fairly general drifting game, while Appendix \ref{Appendix: PDE derivation} justifies a step in our heuristic identification 
of the PDE under a reasonable assumption.

\section{The Main Ideas} \label{Sec: main ideas}
We give more details of drifting games in subsection \ref{subsec:2.1}. Subsection \ref{subsec:2.2} explains why $\delta$ should scale with $T$ by the law
\eqref{eqn: defn-of-gamma, V1 and V2}. Then in Sections \ref{subsec:key ideas}--\ref{subsec:broader classes} we discuss the key ideas that lie behind our analysis. The section closes with a brief summary of 
some notational conventions.

\subsection{The Drifting Game and its Minimax Loss} \label{subsec:2.1}

It is convenient to let the game
end at time $0$; therefore the game starts at time $-T$ and its final round occurs
at time $-1$. The player's choice of $\p = (p_1,\ldots,p_N)$ at time $t$ will be
called $\p_t = (p_{t,1},\ldots,p_{t,N})$, and the adversary's choice of
$\z = (z_1,\ldots,z_N)$ at time $t$ will be called $\z_t = (z_{t,1}, \ldots, z_{t,N})$.
Since each chip is initially located at $0$, the chips' positions at time $t$ satisfy
$s_{-T,i}=0$ and $s_{t+1,i}=s_{t,i}+z_{t,i}$, and their final positions are $s_{0,i}$;
as usual, we shall write $\s_t = (s_{t,1}, \ldots, s_{t,N})$.

We recall that at time $t$, the adversary may choose any $\z_t$ such that $z_{t,i} \in \Z$
for each $i$ and $\p_t \cdot \z_t \geq \delta$.  It is convenient to give the admissible
set a name; we therefore define
$$
S_\delta (\p) = \{ \z\in {\Z}^N | \p \cdot \z \geq \delta\} .
$$

We define $\la^d_\delta(\s,t)$ (the superscript $d$ stands for ``discrete'') to be the player's final-time loss (assuming
optimal play by both parties), if the chips' locations are $\s$ at time $t$. It
is characterized by the dynamic programming principle
\begin{equation}\label{eqn:DPP for original game}
\la^d_\delta (\s,t)=\min_\p\max_{\z\in S_\delta(\p)}\la^d_\delta (\s+\z,t+1) \quad
\mbox{for $t \leq -1$}
\end{equation}
combined with the final-time condition 
$$
\la^d_\delta(\s,0)=\frac{1}{N}\sum_{i=1}^N L(s_i)~.
$$

The dynamic programming principle \eqref{eqn:DPP for original game} defines
$\la^d_\delta (\s,t)$ for all $\s \in \R^N$ and all negative integer times $t$.
Our goal is to estimate the final-time loss when the chips are initially at $0$, i.e. $\la^d_\delta(\bm{0},-T)$, and this is the player's \emph{minimax loss}. Unpacking the dynamic programming principle, it is
$$
\la^d_\delta(0,-T) = \min_{\p_{-T}}\ \max_{\z_{-T}\in S_\delta(\p_{-T})} \ldots
\min_{\p_{-1}}\ \max_{\z_{-1} \in S_\delta(\p_{-1})}
\frac{1}{N}\sum_{i=1}^NL \bigl(\sum_{t=-T}^{-1}z_{t,i}\bigr).
$$

\subsection{The Dependence of \texorpdfstring{$\delta$ on $T$}{interval}} \label{subsec:2.2}

It is already familiar from \cite{FREUND2002113,Boosting_book} that when focusing on the asymptotic behavior as
$T \rightarrow \infty$, the parameter $\delta$ should depend on $T$ by \eqref{eqn: defn-of-gamma, V1 and V2}. The argument
there involves taking a scaling limit of the game (in roughly the same way that Brownian motion arises
as the scaling limit of random walk on a lattice). We shall consider such a scaling limit in
Section \ref{Sec: heuristic pde derivations}, where we discuss how our ideas might extend to
a more general class of drifting games.

It is, however, possible to understand the dependence of $\delta$ on $T$ quite simply, as follows. As already noted
in the Introduction, our potentials are built from solutions of Equation \eqref{eqn: backward heat equation} or
\eqref{eqn: backward nonlinear equation}, and in each case the solution has the form
$f(s,t) = G\left(\left(s+\delta T\right)/\sqrt{|t|}\right)$. For $f(s,-T)$ to have a nontrivial limit as $T \rightarrow \infty$, $\delta T / \sqrt{T}= \delta \sqrt{T}$ should be held constant. Our law \eqref{eqn: defn-of-gamma, V1 and V2} is a restatement of this condition.


\subsection{The Role of Taylor Expansion, and Relevance of the PDEs} \label{subsec:key ideas}

The role of Taylor expansion is the following: to show that a potential $\la (\s,t)$ is
approximately equal to $\la^d_\delta$, we need only check that it is close at the final time and that
it approximately satisfies the dynamic programming principle. The quality of the approximation can be
estimated by adding up the final-time error and the sum of all the approximation errors at times
$-T,-T+1, \ldots -1$.


Note that Taylor expansion requires smoothness, whereas the
final-time data for these PDEs (our loss function) is discontinuous, so we cannot use the solution of the
PDE directly. Rather, we use a \emph{shifted} version of it; specifically, rather
than use the solution $f$ of Equation \eqref{eqn: backward heat equation} or
\eqref{eqn: backward nonlinear equation} our potentials have the separable form
\eqref{separable-potential} with $f$ replaced by $\tilde{f}(s,t) = f(s + \beta, t - \tau) + c$,
where $\beta$, $\tau$, and $ c$ are $T$-dependent constants. The constant $\tau > 0$ is chosen so that
$\tilde{f}$ is sufficiently smooth for $t = 0$. The constants $\beta$ and $c$ are chosen so that $\tilde{f}(s,0) \geq L(s)$
for an upper-bound potential (used to identify a good strategy for the player), or so that
$\tilde{f}(s,0) \leq L(s)$ for a lower-bound potential (used to identify a good strategy for the adversary).

We turn now to a more quantitative explanation of how Taylor expansion will be used. 
For any smooth potential $\la(\s,t)$, Taylor's theorem gives
\begin{align} \label{eqn:Taylor expansion}
\la(\s+\z,t+1)-\la(\s,t) = & [\la(\s+\z,t+1) - \la(\s, t+1)] + [\la(\s,t+1) - \la(\s,t)] \nonumber \\
 = & \nabla\la(\s, t+1)\cdot\z +\left(\partial_t\la(\s,t+1)+\frac{1}{2}\z^\top D^2\la(\s, t+1)\z\right)+ E_t
\end{align}
where $E_t$ represents the ``error'' due to the chosen truncation of the Taylor expansion. To illustrate the main idea, we ignore $E_t$ and the smoothness issue of $\la$ for now.

The dynamic programming principle \eqref{eqn:DPP for original game} says that when $\la$ is replaced
by $\la^d_\delta$, the min-max of the left hand side is $0$. Therefore we would like
the min-max of the right side to be nearly $0$:
$$
\min_{\p} \max_{\z \in S_\delta (\p)} \nabla\la(\s, t+1)\cdot\z +
\left(\partial_t\la(\s,t+1)+\frac{1}{2}\z^\top D^2\la(\s, t+1)\z\right) \approx 0.
$$
We emphasize that to prove such a statement, one must identify good strategies for both the player and
the adversary. The player's strategy is a choice of $\p$ such that for every ${\z \in S_\delta (\p)}$,
$ \nabla\la(\s, t+1)\cdot\z + \left(\partial_t\la(\s,t+1)+\frac{1}{2}\z^\top D^2\la(\s, t+1)\z\right) \leq 0$. The adversary's strategy is a way of choosing $\z \in S_\delta (\p)$ (given $\p$),
such that
$ \nabla\la(\s, t+1)\cdot\z + \left(\partial_t\la(\s,t+1)+\frac{1}{2}\z^\top D^2\la(\s, t+1)\z\right) \geq 0$.

The situation is simplest to understand when $\delta = 0 $ and $\Z = \{ \pm 1 \}$.
Then the player (who chooses $\p$ and wants to minimize) can make the first-order term
non-positive by choosing $\p$ proportional to $-\nabla \la$. (We use here that our potential has
$\partial \la / \partial s_i \leq 0$ for any $i \in [N]$.) It is natural to guess that the adversary (who
chooses $\z$ after hearing $\p$, and wants to maximize)
can choose $\z$ so that the first-order term is close enough to $0$ to treat it as an ``error term.''
If the potential has the form
\begin{equation} \label{separable-potential-bis}
\la(\s ,t) = \frac{1}{N} \sum_{i=1}^N \tilde{f}(s_i, t)
\end{equation}
for some scalar-valued function $\tilde{f}(s,t)$ then
$$
\z^\top D^2\la(\s, t+1) \z = \frac{1}{N} \sum_{i=1}^N \tilde{f}''(s_i,t)
$$
is independent of $\z$ (since each $z_i = \pm 1$). So the min-max vanishes (modulo error terms)
when $\tilde{f}_t + \frac{1}{2} \tilde{f}'' = 0$.

The situation is only a little different when $\delta = 0$ and $\Z = [-1,1]$ (provided we ignore the
non-smoothness of the potential). The player can still make the first-order term nonpositive by
choosing $\p$ proportional to $- \nabla \la$, and the adversary can still make the first-order term
close enough to zero that it becomes an ``error term.'' But when $\Z = [-1, 1]$, the second-order
Taylor expansion term is no longer independent of $\z$, and
$$
\max_{\z \in [-1,1]^N} \z^\top D^2\la(\s, t+1) \z = \frac{1}{N} \sum_{i=1}^N \max \{\tilde{f}''(s_i,t+1),0\},
$$
since the optimal $z_i$ is $\pm 1$ when $\tilde{f}'' \geq 0$ and $0$ when $\tilde{f}'' \leq 0$.
This is the origin of the PDE \eqref{eqn: backward nonlinear equation}.

\subsection{Comments on Broader Classes of Drifting Games} \label{subsec:broader classes}

We have thus far discussed a few specific examples of drifting games. One might wonder whether PDEs can
be used to determine the minimax loss in the limit $T \rightarrow \infty$ for a more general class of drifting games. While we have no rigorous results of this kind, Appendix \ref{Sec: heuristic pde derivations} 
offers a suggestion. The arguments there use
a scaled version of the game. Not surprisingly, when specialized to the cases considered in this section, they
reduce to the heuristic derivations of the PDEs that we have given in this section (see Appendix \ref{subsec:7.3}).

As noted earlier, our lower bounds require that $N$ be sufficiently large. It is natural to ask whether this is just a limitation of our method. We think not; rather, the situation is fundamentally
different when $N$ is small. Indeed, the arguments in Appendix \ref{subsec:7.2} consider what happens when $T \rightarrow \infty$ with $N$ held fixed. When $\Z = [-1,1]$ and $\delta = 0$, they suggest that the PDE associated with the (optimal) player potential $\la(\s, t)$ should be
$$
\partial_t \la +
\frac{1}{2} \, \underset{\nabla\la \perp \z, \, \z \in [-1,1]^N}
{\max} \z^\top D^2\la \z = 0
$$
(see Equation \eqref{eqn: PDE for drifting game}). The potential we use
for our rigorous results solves a similar but different equation, in
which the max is over all $\z \in [-1, 1]^N$. It seems that, when $N$ is large,
the constraint $\nabla
\la \perp \z$ is unimportant as there always exists $\z$ s.t. $\nabla \la \cdot \z \sim O(\frac{1}{N})$, whereas when $N$ is small the constraint is
important.

\subsection{Notation} \label{subsec:2.7}
We introduce some notation that are used throughout this paper. 
For a function $f(s,t)$ where $s$ is the spatial variable and $t$ is the time variable, we use $\partial_t,\partial_{tt},\ldots$ to represent time derivatives and $f',f'',f^{(3)},f^{(4)}\ldots$ to represent spatial derivatives. For functions with more than one spatial variable, $\nabla$ and $D^2$ are used to represent the gradient and Hessian of the spatial variables. Bold letters, such as $\s$, $\z$ and $\p$, are vectors
in $\mathbb{R}^N$, and normal letters like $s$ and $t$ are scalars. Also, $\bm{1}$ and $\bm{0}$
are vectors of all 1's and all 0's respectively. For the sake of brevity we sometimes omit the time subscript and write $s_{t,i}$, $z_{t,i}$ and $p_{t,i}$ as $s_i$,
$z_i$ and $p_i$ respectively. We use $C$ to represent absolute constants and big $O$ notation $f=O(g)$ means $f\leq C g$ for some $C$. Lastly, the ceiling function
$\lceil x \rceil$ is the smallest integer greater than or equal to $x$.


\section{Game V1}\label{Sec: discrete action set}
In this section we consider game V1: $\delta \geq 0, \mathcal{Z} = \{\pm 1\}$, and $L(s)=\ind{s \leq 0}$. The ideas here extend straightforwardly to game V3 (see Appendix \ref{Sec: discrete action set 2}). This game is related to classical boosting setting where the weak learners make predictions from $\{\pm 1\}$. We give new player and adversary strategies for this game (thus also for the classical boosting setting), and give (matching) upper and lower bounds which can be seen as the limit of the discrete bounds given in \cite{Schapire99driftinggames,schapire2001drifting} as $T\rightarrow\infty$.  
\subsection{Additional Ideas in Game V1}\label{subsec:additional ideas in V1}
Replacing $z_i$ by $z_i - \delta$, it is equivalent to consider situation when
$\Z = \{ -1-\delta, 1-\delta \}$, $L(s) = \ind{\s \leq - \delta T}$, and the adversary's global constraint is
$\p \cdot \z \geq 0$. The player's strategy should, as explained above, be to choose $\p$ proportional
to $-\nabla \la$. The adversary needs to choose $z_i \in \{ -1-\delta, 1-\delta \}$ such that $\p \cdot \z$
is very near $0$; our proof that this is possible is probabilistic, adopting an argument from
\cite{Schapire99driftinggames, schapire2001drifting}. However, when $\delta > 0$ the second-order Taylor expansion
term is no longer independent of the choice of $z_i \in \{-1-\delta, 1 - \delta\}$. Fortunately, $\delta$ is
small when $T$ is large, as a result of the scaling \eqref{eqn: defn-of-gamma, V1 and V2}. So if $\tilde{f}$ solves
$\tilde{f}_t + \frac{1}{2} \tilde{f}'' = 0$, we can apply the argument sketched in Section \ref{subsec:key ideas}, estimating
$$
\z^\top D^2\la(\s, t+1) \z \sim \frac{1}{N} \sum_{i=1}^N \tilde{f}''(s_i,t) +
\delta \max_{s \in \R} |\tilde{f}''(s,t)|. 
$$
The second term is treated as an ``error term'' (alongside the errors associated with truncation of the Taylor
expansion and nonzero $\nabla \la \cdot \z$).

We work on this equivalent version and build our potential using the solution of Equation \eqref{eqn: backward heat equation}. When $\delta=0$ the explicit solution
\footnote{We can see from Figure \ref{fig:Solution} that $g(\cdot,t)-1/2$ is a strictly decreasing odd function. Also $0\leq g\leq 1$ and $g$ is concave when $s<0$ and convex when $s>0$; moreover as $t\rightarrow -\infty$, $g(s,\cdot)$ is decreasing for $s<0$ and increasing for $s>0$, and $\lim_{t\rightarrow-\infty}g(s,t)=1/2$ for all $s$.} 
is
\[
g(s,t) = \frac{1}{\sqrt{\pi}}\int_{s/\sqrt{-2t}}^\infty e^{-x^2} dx~.
\]
\begin{figure}[H]
    \centering
    \begin{minipage}[t]{0.48\textwidth}
    \centering
    \includegraphics[width = 6cm]{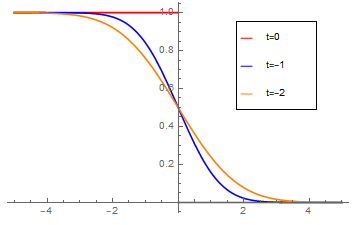}
    \caption{Solution $g$ at $t=0,-1,-2$}
    \label{fig:Solution}
    \end{minipage}
    \begin{minipage}[t]{0.48\textwidth}
    \centering
    \includegraphics[width = 6cm]{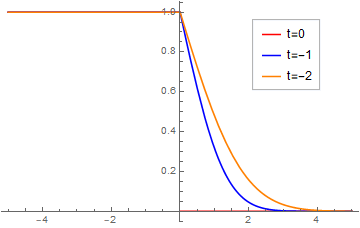}
    \caption{Solution $\tilde{g}$ at $t=0,-1,-2$}
    \label{fig:solution2}
    \end{minipage}
\end{figure}

For Equation \eqref{eqn: backward heat equation}, the solution is
\[
f(s,t)=g(s+\delta T,t)~,
\] 
which serves as the building block of the upper and lower bound potentials.

\subsection{Potentials}\label{subsec:potentials of V1}
As mentioned in \ref{subsec:key ideas}, we do not use $f$ directly; rather we use a shifted version of $f$ to build the upper bound potential $\la_U$:
\begin{align*}
    \la_U(\s, t) = \frac{1}{N}\sum_{i=1}^N f(s_i - \beta, t - \tau) + 1 - f(-\delta T-\beta, -\tau)~,
\end{align*}
with $\beta=\tau=T^{\frac{\theta}{2}}$ for any $\theta \in (0, \frac{2}{3})$. Note that $\la_U$ is a strictly decreasing function on all the spatial coordinates, which implies that $\p_t\sim-\nabla\la_U(\s, t+1)$ is a valid probability distribution. The upper bound corresponding to the player strategy is given in Theorem \ref{thm:upper bound thm 1} (see Appendix \ref{Appendix: discrete action set}).

We use a different shifted version of $f$ to construct the lower bound potential $\la_L$:
\[
\la_L(\s, t) = \frac{1}{N} \sum_{i=1}^N f(s_i + \beta, t - \tau) - f(-\delta T + \beta, -\tau)~,
\]
with $\beta=\tau=T^{\frac{\theta}{2}}$ for any $\theta \in (0, \frac{2}{3})$. When $N$ is at least $O(T^{2+(\theta/2)}\log T)$, the adversary strategy and corresponding lower bound using this potential is given in Theorem \ref{thm: lower bound thm 1} (see Appendix \ref{Appendix: discrete action set}).

\section{Game V2}\label{Sec: continuous shifted action set}
In this section we consider game V2: $\delta \geq 0, \mathcal{Z} = [-1,1]$, and $L(s)=\ind{s \leq 0}$. The ideas here extend straightforwardly to game V4 (see Appendix \ref{Appendix: continuous action set variant 2}). This game is related to a continuous variant of boosting game. In \cite{Schapire99driftinggames,schapire2001drifting} they build this connection and also gave some numerical results for both V2 and the associated \emph{continuous boosting game}, however no characterization of the bound was given. We give new player and adversary strategies for this game (thus also for the \emph{continuous boosting game}) and give upper and lower bounds that match when $T \rightarrow \infty$. 

\subsection{Additional Ideas in Game V2}
\label{subsec:2.5}
As in Section \ref{Sec: discrete action set}, it convenient to replace $z_i$ by $z_i - \delta$, which leads us to consider the drifting game
in which $\Z = [-1-\delta, 1-\delta]$, $L(s) = \ind{s \leq - \delta T}$, and the adversary's global
constraint is $\p \cdot \z \geq 0$. Our potential has the separable form \eqref{separable-potential-bis}, but $\tilde{f} = f(s + \beta, t - \tau) + c$ where $f$ solves a nonlinear PDE like
\eqref{eqn: backward nonlinear equation}. In particular, $\tilde{f}(s,t)$ is constant (independent of
both $s$ and $t$) when $s$ lies below a critical value, while $\tilde{f}$ is convex and smooth when $s$ lies above the critical value.

The player's strategy is always the same: $\p$ should be proportional to $- \nabla \la$. Note that when $s_i$ is
below the critical value this gives $p_i = 0$. The adversary's strategy must, as usual, choose $z_i$ such that $\p \cdot \z $ is close enough to $0$ so $\nabla \la \cdot \z$ can be treated as an error term. We find it convenient to limit the adversary's strategy to $z_i$ taken only from $\{0, \pm(1 - \delta)\}$. In fact, our adversary chooses $z_i = 0$ when $s_i$ is at or below the
critical value, and $z_i \in \{\pm(1 - \delta)\}$ when $s_i$ is above the critical value. Since the two nonzero
possibilities are symmetric, the existence of such $\z$ such that $\p \cdot \z$ is nearly $0$ can be proved using
a combinatorial argument previously used in \cite{ContExperts}. (This is simpler than the probabilistic argument
of \cite{Schapire99driftinggames, schapire2001drifting}, and for a given error it requires a smaller number of
chips.)

Since $\tilde{f}(s,t)$ is only continuous at the critical value of $s$, our use of Taylor expansion
needs to be re-examined. When $s_i$ is well below the critical value, $\tilde{f}$ is locally constant in both space
and time so Taylor expansion is not needed. When $s_i$ is well above the critical value, our Taylor-expansion-based
arguments are applicable. When $s_i$ is close enough to the critical value that $s_i$ and $s_i + z_i$ can be
on opposite sides of it, special attention is needed. Fortunately, the required inequalities are available
in this situation, by combining Taylor expansion \emph{restricted to $s$ greater than the critical value} with
the monotonicity of $\tilde{f}$.

We build our potential using the solution of Equation \eqref{eqn: backward nonlinear equation}. When $\delta=0$, we construct a piece-wise smooth solution
\footnote{From figure \ref{fig:solution2} we can see that on the right of origin $\tilde{g}(\cdot, t)$ is convex, while on the left $\tilde{g}(\cdot, t)=1$ is a constant. The potential $\tilde{g}$ is continuous at origin, but not differentiable. }
\begin{align*}
    \tilde{g}(s,t) =
    \begin{cases}
        2g(s,t)= \frac{2}{\sqrt{\pi}}\int_{s/\sqrt{-2t}}^\infty e^{-x^2} dx &s>0\\
        1 &s\leq 0~,
    \end{cases}
\end{align*}

For Equation \eqref{eqn: backward nonlinear equation} with $\delta>0$, the solution can be written as 
\[
f(s,t)=\tilde{g}(s+\delta T,t)~,
\] 
which serves as the building block of the upper and lower bound potentials.


\subsection{Potentials}\label{subsec:potentials of V2}
We use a time-shifted version as the upper bound potential
\[
\la_U(\s,t) = \frac{1}{N}\sum_{i=1}^N f(s_i, t-\tau)~,
\]
where $\tau=T^\theta$ for any $\theta \in (0, \frac{2}{3})$. $\la_U$ is a decreasing function in all the spatial variables. 

The player imposes the following distribution: if at least one chip is on the right of $-\delta T$,
\begin{align}\label{stgy: continuous action set player strat}
    p_{i} \sim
    \begin{cases}
        0&s_i \leq -\delta T\\
        -f'(s_i, t + 1 - \tau)&s_i > -\delta T~;
    \end{cases}
\end{align}
otherwise if all the chips are on the left of $-\delta T$, any probability distribution is fine. With a more refined analysis compared to game V1 we get Theorem \ref{thm:upper bound thm 2} (see Appendix \ref{Appendix: continuous action set variant 2}).

The lower bound potential $\la_L$ is defined as,
\[
\la_L(\s, t) = \frac{1}{N} \sum_{i=1}^N f(s_i + \beta, t - \tau) - f(-\delta T+\beta, -\tau)~,
\]
with $\beta=\tau=\lceil \frac{\delta T + T^{\frac{\theta}{2}}}{1-\delta} \rceil (1-\delta) -\delta T$ for any $\theta \in (0, \frac{2}{3})$. When $N \geq T^\frac{\theta+2}{4}$, the adversary strategy and corresponding lower bound using this potential is given in Theorem \ref{thm: lower bound thm 2} (see Appendix \ref{Appendix: continuous action set variant 2}).






\subsection{Application to Continuous Boosting Game}\label{subsec:continuous boosting game}
In the classical boosting setting of binary classification, we are given a set of $N$ training examples
$\{(\x_i, y_i)\}_{i \in [N]}$ where $x_i \in \mathcal{X}$ is an example and $y_i \in \{-1, 1\}$ is its label. A boosting algorithm proceeds for $T$ rounds ($-T, \ldots, -1$). At each round, a distribution $\p_t$ over the examples is computed by the player and the adversary returns a “weak” hypothesis $h_t : \mathcal{X} \rightarrow \{-1, 1\}$ with a guaranteed small edge, that is, 
\[
\sum_{i:y_i=h_t(\x_i)} p_{t,i} - \sum_{j:y_j\neq h_t(\x_j)} p_{t,j} \geq \delta.
\]
At the end, the final “strong” hypothesis is a majority vote of all $h_t$'s and it is expected to have low training error and potentially low generalization error.

We consider the setting where the weak hypothesis $h_t$ can predict any value between $[-1,1]$; here the sign is interpreted as a prediction, while the absolute value expresses the confidence level of this prediction. In particular, a weaker learner could choose to abstain from giving any prediction at all by predicting 0. The weak hypothesis $h_t$ satisfies
\begin{align*}\label{eq: *}
    \sum_{i:y_i h_t(\x_i)>0} p_{t,i}|h_t(\x_i)| - \sum_{j:y_j h_t(\x_j) \leq 0} p_{t,j}|h_t(\x_j)| \geq \delta. \tag{*}
\end{align*}

This version of the boosting game is equivalent to game V2. More specifically, each sample-label pair $(\x_i, y_i)$ is a ``chip" and after the player poses an probability distribution $\p_t$ on the N chips, the weak hypothesis $h_t$ given by the adversary determines the movement of chip $i$ by $z_{t,i} = y_ih_t(\x_i)\in[-1,1]$, and (\ref{eq: *}) enforces that $\p_t \cdot \z_t \geq \delta$. The final hypothesis is a majority vote of the weak learners and the training error is
$$
\frac{1}{N} \sum_{i=1}^N \ind{\sum_{t=-T}^{-1} y_ih_t(\x_i) \leq 0} = 
\frac{1}{N} \sum_{i=1}^N \ind{s_{0, i} \leq 0}~,
$$
which is exactly the final loss of drifting game V2.

Therefore, we have the following theorem for the \emph{continuous boosting game}.
\begin{theorem}\label{thm: continuous boosting game}
For the \emph{continuous boosting game} that lasts $T$ rounds with edge $\frac{\delta}{2}$ and any $\theta \in (0, \frac{2}{3})$, the player using the probability distribution described in \eqref{stgy: continuous action set player strat} achieves a training error of at most 
\[
\frac{1}{N} \sum_{i=1}^N \ind{\sum_{t=-T}^{-1} y_ih_t(\x_i) \leq 0} \leq \frac{2}{\sqrt{\pi}}\int^{\infty}_{\sqrt{\gamma}}e^{-x^2}dx+O(T^{-\frac{\theta}{2}})~. 
\]
When the number of samples $N \geq T^{\frac{\theta+2}{4}}$, and assuming the player's final hypothesis is a majority vote of the weak hypothesis, for any such player there exists an adversary strategy such that the training error is at least
\[
\frac{1}{N} \sum_{i=1}^N \ind{\sum_{t=-T}^{-1} y_ih_t(\x_i) \leq 0} \geq \frac{2}{\sqrt{\pi}}\int^{\infty}_{\sqrt{\gamma}}e^{-x^2}dx-O(T^{-\frac{\theta}{4}})~. 
\]
    
\end{theorem}

\begin{remark}
    The player's strategy described by (\ref{stgy: continuous action set player strat}) assigns zero weight for chips which are far from zero; this implies that the boosting algorithm assigns zero weight to samples with large negative margins. This potentially benefits the running time for the boosting algorithm, since the weak hypothesis is trained on fewer samples. 
\end{remark}

\section{Conclusions}\label{Sec: conclusion}
We have developed a PDE approach to some versions of \emph{drifting games}. Using solutions of PDEs as potentials, we give asymptotically optimal strategies for both the player and the adversary, together with upper and lower bounds of the final loss that match when $T\rightarrow\infty$. These new strategies can also be applied to boosting and online learning problems like \emph{prediction with expert advice}, and the upper and lower bounds are also valid in these games. In particular, we obtain an asymptotically sharp loss bound for V2 (thus also for the associated \emph{continuous boosting game}, assuming the player use a majority vote of the weak learners), answering one of the open problems in \cite{Schapire99driftinggames,schapire2001drifting}. We believe the PDE approach might be useful to characterize the loss when $T$ is large in some other cases where the discrete dynamic programming principle has no closed form solution.

\bibliographystyle{plainnat}
\bibliography{reference}

\appendix

\section{Theorems and Proofs for V1}\label{Appendix: discrete action set}
First we control the derivatives of the solution of Equation \eqref{eqn: backward heat equation} by the following lemma.

\begin{lemma}\label{lma: Control of higher order derivatives of g}
There exists constant $C$ s.t. $\|g'(\cdot,t)\|_{L^\infty}\leq\frac{C}{\sqrt{|t|}}$, $\|g''(\cdot,t)\|_{L^\infty}\leq\frac{C}{|t|}$, $\| g^{(3)}(\cdot,t)\|_{L^\infty}\leq\frac{C}{|t|^\frac{3}{2}}$, and $\|\partial_{tt}g(\cdot,t)\|_{L^\infty}\leq \frac{C}{|t|^2}$.
\end{lemma}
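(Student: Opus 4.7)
The plan is to exploit the self-similar form of $g$. Define $G(x) \defeq \frac{1}{\sqrt{\pi}} \int_x^\infty e^{-y^2}\,dy$; then $g(s,t) = G\bigl(s/\sqrt{2|t|}\bigr)$ for $t < 0$. By the chain rule, each spatial derivative picks up a factor of $(2|t|)^{-1/2}$:
\begin{equation*}
g^{(k)}(s,t) = (2|t|)^{-k/2}\, G^{(k)}\bigl(s/\sqrt{2|t|}\bigr).
\end{equation*}
Since $G'(x) = -\frac{1}{\sqrt{\pi}} e^{-x^2}$, every derivative $G^{(k)}$ is $e^{-x^2}$ times a polynomial in $x$, hence uniformly bounded on $\R$. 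Taking the supremum in $s$ absorbs the $G^{(k)}$ factor into a constant, which yields $\|g^{(k)}(\cdot,t)\|_{L^\infty} \le C\, |t|^{-k/2}$ for each $k = 1,2,3$. This handles the three spatial bounds.

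For $\partial_{tt} g$, I would avoid differentiating the integral twice in $t$ directly and instead use the PDE itself. Since $g$ solves the backward heat equation $\partial_t g = -\tfrac12 g''$ (which is obvious from the explicit formula, and in any case just says that $G$ satisfies a consistent ODE in its similarity variable), differentiating once more in $t$ and commuting derivatives gives
\begin{equation*}
\partial_{tt} g = -\tfrac12\, \partial_t g'' = -\tfrac12 (\partial_t g)'' = \tfrac14\, g^{(4)}.
\end{equation*}
The same self-similarity argument applied at $k=4$ yields $\|g^{(4)}(\cdot,t)\|_{L^\infty} \le C/|t|^2$, and hence the claimed bound on $\|\partial_{tt} g(\cdot,t)\|_{L^\infty}$.

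There is really no obstacle here, just a routine computation; the only thing to be careful about is that the bounds on $G^{(k)}$ are genuinely finite. This follows because each $G^{(k)}(x)$ is of the form $p_k(x)\, e^{-x^2}$ for some polynomial $p_k$ (one can verify this by induction, starting from $G'(x) = -\pi^{-1/2} e^{-x^2}$ and using $(p(x) e^{-x^2})' = (p'(x) - 2x\, p(x))\, e^{-x^2}$), so the Gaussian factor dominates and $\|G^{(k)}\|_{L^\infty(\R)} < \infty$. All four inequalities in the lemma then follow by taking $C$ to be the maximum of the relevant suprema (together with the factors of $\sqrt{2}$ coming from the similarity variable).
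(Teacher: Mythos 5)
Your proof is correct and follows essentially the same route as the paper: both compute the spatial derivatives explicitly, observe that each is a Gaussian in the similarity variable $s/\sqrt{2|t|}$ times a polynomial (hence uniformly bounded), read off the $|t|^{-k/2}$ scaling, and use the identity $\partial_{tt} g = \tfrac14 g^{(4)}$ (from the heat equation) to reduce the time derivative bound to the $k=4$ spatial bound. Your presentation via the self-similar profile $G$ is a slightly cleaner way to organize the same calculation, but there is no substantive difference.
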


\begin{proof}[Proof of lemma \ref{lma: Control of higher order derivatives of g}]
Doing some algebra we have
\begin{align*}
    g'(s,t)=&-\frac{1}{\sqrt{-2\pi t}}e^{\frac{s^2}{2t}}\\
    g''(s,t)=&-2\partial_t g(s,t)=-\frac{1}{t\sqrt{\pi}}e^{\frac{s^2}{2t}}\frac{s}{\sqrt{-2t}}\\
    g^{(3)}(s,t)=&-\frac{2}{t\sqrt{-2\pi t}}e^{\frac{s^2}{2t}}(1/2+\frac{s^2}{2t})\\
    \partial_{tt}g(s,t)=&\frac{1}{4}g^{(4)}(s,t)=-\frac{1}{2t^2}e^{\frac{s^2}{2t}}\frac{s}{\sqrt{-2\pi t}}(3/2+\frac{s^2}{2t});
\end{align*}
thus there exists a constant $C$ such that $\|g'(\cdot,t)\|_{L^\infty}\leq\frac{C}{\sqrt{|t|}}$, $\|g''(\cdot,t)\|_{L^\infty}\leq\frac{C}{|t|}$, $\| g^{(3)}(\cdot,t)\|_{L^\infty}\leq\frac{C}{|t|^\frac{3}{2}}$, and  $\|\partial_{tt}g(\cdot,t)\|_{L^\infty}\leq\frac{C}{|t|^2}$.
\end{proof}

Next we prove the upper bound
\begin{theorem}\label{thm:upper bound thm 1}
For any $\theta\in(0,\frac{2}{3})$ the player strategy proportional to the negative gradient of $\la_U$ with $\beta=\tau=T^\frac{\theta}{2}$ satisfies
\[
\frac{1}{N}\sum_{i=1}^N L(s_{0,i})\leq
    \frac{1}{\sqrt{\pi}}\int^{\infty}_{\sqrt{\gamma}}e^{-x^2}dx+O(T^{-\frac{\theta}{4}})~,
\]
for any adversary strategy $\mathcal{A}$. 
\end{theorem}

\begin{proof}[Proof of Theorem \ref{thm:upper bound thm 1}]
For the sake of simplicity, at any fixed time step $t$, we omit $t$ in the subscripts of $\s,\p,\z,E$ (which stands for the Taylor expansion error). We write the increment of $\la_U$ from $t$ to $t+1$ as
\begin{align*}
    &\la_U(\s+\z, t+1)-\la_U(\s, t)\\
    =& \frac{1}{N}\sum_{i=1}^N f(s_i-\beta+z_i,t+1-\tau) - f(s_i-\beta,t-\tau)\\
    =& \frac{1}{N}\sum_{i=1}^N f(s_i-\beta+z_i,t+1-\tau) - f(s_i-\beta,t+1-\tau) + f(s_i-\beta,t+1-\tau) - f(s_i-\beta,t-\tau)\\
    =& \frac{1}{N}\sum_{i=1}^N f'(s_i-\beta ,t+1-\tau)z_i + \frac{1}{2} f''(s_i-\beta ,t+1-\tau)z_i^2 + \partial_tf(s_i-\beta, t+1-\tau)+ \frac{1}{N}\sum_{i=1}^N E_i~.
\end{align*}
The error term 
\begin{align*}\label{eqn: error term}
    E_i = \frac{1}{6}f^{(3)}(\tilde{\s}_i,t+1-\tau)z_i^3-\frac{1}{2}\partial_{tt}f(s_i-\beta,\tilde{t}_i)~,
\end{align*}
where $\tilde{\s}_i$ is between $s_i-\beta$, $s_i-\beta+z_i$ and $\tilde{t}_i$ is between $t-\tau$ and $t+1-\tau$. 
Using Lemma \ref{lma: Control of higher order derivatives of g} and the fact that $|z_i| \leq 1+\delta$, we can bound $\frac{1}{N}\sum_{i=1}^N E_i$ by 
\[
|\frac{1}{N}\sum_{i=1}^N E_i| \leq \sup_{i\in[N]}|E_i| \leq \frac{C}{|t+1-\tau|^\frac{3}{2}}+\frac{C}{|t+1-\tau|^2}~,
\]
where $C$ is some absolute constant.

Recall that the player set $p_i\sim-f'(s_i-\beta ,t+1-\tau)$ and since $\p\cdot\z \geq 0$, we have
\[
\sum_{i=1}^N f'(s_i-\beta ,t+1-\tau)z_i \leq 0~.
\]
Moreover, as $f$ satisfies Equation \eqref{eqn: backward heat equation}, we have
\begin{align*}
   &\frac{1}{N}\sum_{i=1}^N f'(s_i-\beta ,t+1-\tau)z_i + \frac{1}{2} f''(s_i-\beta ,t+1-\tau)z_i^2 + \partial_tf(s_i-\beta, t+1-\tau)\\
   \leq& \frac{1}{N}\sum_{i=1}^N \partial_tf(s_i-\beta, t+1-\tau) + \frac{1}{2} f''(s_i-\beta ,t+1-\tau)\\
   &+ \frac{1}{N}\sum_{i=1}^N f''(s_i-\beta ,t+1-\tau)\frac{z_i^2-1}{2}\\
   \leq& \frac{3\delta}{2} \sup_{i\in[N]}|f''(s_i-\beta ,t+1-\tau)|\\
   \leq& \frac{C\sqrt{\gamma}}{|t+1-\tau|\sqrt{T}}~.
\end{align*}
The second inequality used the fact that $\delta\in [0,1]$ thus $\delta^2 \leq \delta$, and the last inequality used Lemma \ref{lma: Control of higher order derivatives of g} and the definition of $\gamma$ in Equation \eqref{eqn: defn-of-gamma, V1 and V2}.

Combining the above analysis together we now add up the increment of $\la_U$ from $-T$ to $-1$,
\begin{align*}
    \frac{1}{N}\sum_{i=1}^N L(s_{0,i})\leq\la_U(\s_0,0)&=\la_U(\bm{0}, -T)+\sum_{t=-T}^{-1}\la_U(\s_{t+1}, t+1)-\la_U(\s_t, t)\\
    &\leq\la_U(\bm{0}, -T)+C\sum_{t=-T+1-\tau}^{-\tau}\frac{1}{|t|^2} + \frac{1}{|t|^\frac{3}{2}} + \frac{\sqrt{\gamma}}{|t|\sqrt{T}}\\
    &=\la_U(\bm{0}, -T)+O\left(\frac{1}{\sqrt{\tau}}+\frac{1}{\tau}+\frac{\sqrt{\gamma}\log\tau}{\sqrt{T}}\right)~.
\end{align*}
Note that
\begin{align*}
    \la_U(\bm{0}, -T)&=f(-\beta,-T-\tau)+1-f(-\delta T-\beta, -\tau)\\
    &=1-\frac{1}{\sqrt{\pi}}\int^{\frac{\beta}{\sqrt{2\tau}}}_{\frac{\beta-\delta T}{\sqrt{2(T+\tau)}}}e^{-x^2}dx~.
\end{align*}
We want $\frac{\beta}{\sqrt{2\tau}}\gg 1$ and $\frac{\beta}{\sqrt{2T}}\rightarrow 0$. This can be achieved by setting $\beta=\tau=T^{\frac{\theta}{2}}$ where $\theta\in(0,\frac{2}{3})$. Plugging in $\beta=\tau=T^\frac{\theta}{2}$ we get
\begin{align*}
    \frac{1}{N}\sum_{i=1}^N L(s_{0,i})&\leq
    1-\frac{1}{\sqrt{\pi}}\int^{T^{\theta/4}/\sqrt{2}}_{-\sqrt{\gamma/\left(1 + T^{\theta/2 - 1}\right)}+\frac{T^{\left(\theta-1\right)/2}}{\sqrt{2\left(1 + T^{\theta/2 - 1}\right)}}}e^{-x^2}dx+O(T^{-\frac{\theta}{4}})\\
    &= 1-\frac{1}{\sqrt{\pi}}\int^{\infty}_{-\sqrt{\gamma/\left(1 + T^{\theta/2 - 1}\right)}}e^{-x^2}dx+O(T^{-\frac{\theta}{4}})\\
    &=\frac{1}{\sqrt{\pi}}\int^{\infty}_{\sqrt{\gamma}}e^{-x^2}dx+O(T^{-\frac{\theta}{4}})~.
\end{align*}
\end{proof}

To prove the lower bound we first state the well-known Hoeffding's inequality (c.f. \cite{Hoeffding:1963}).
\begin{lemma}[Hoeffding's inequality]\label{lma: Hoeffding}
Suppose $X_1,\ldots,X_N$ are independent random variables with $X_i$ taking values in $[a_i,b_i]$ for all $i\in[1,N]$. Then for any $\ep>0$, the following inequalities hold for $S_N=\sum_{i=1}^NX_i$
\begin{align*}
    &Pr[S_N-\E[S_N]\geq\ep]\leq e^{-2\ep^2/\sum_{i=1}^N(b_i-a_i)^2}\\
    &Pr[S_N-\E[S_N]\leq-\ep]\leq e^{-2\ep^2/\sum_{i=1}^N(b_i-a_i)^2}
\end{align*}
\end{lemma}

Now we give the proof of the lower bound.

\begin{theorem}\label{thm: lower bound thm 1}
For any $\theta\in(0,\frac{2}{3})$, if the number of chips satisfies
\begin{align*}\label{growth of N, discrete case}
    N>8T^{2+(\theta/2)}\log(1-\sqrt{e^{-T^{-2-(\theta/2)}}})^{-1}=O(T^{2+(\theta/2)}\log T)~,\tag{*}
\end{align*}
then for any player strategy there exists an adversary strategy $\mathcal{A}$ using $\la_L$ with $\beta=\tau=T^{\theta/2}$ such that
\[
    \frac{1}{N}\sum_{i=1}^N L(s_{0,i})\geq\frac{1}{\sqrt{\pi}}\int^{\infty}_{\sqrt{\gamma}}e^{-x^2}dx-O(T^{-\theta/4})~.
\]
\end{theorem}
Now we proceed to lower bound.
\begin{proof}[Proof of Theorem \ref{thm: lower bound thm 1}]
We construct the adversary strategy $\mathcal{A}$ by first introducing a randomized adversary which assigns value to each $z_{i}$ in an \emph{i.i.d} fashion with the following distribution
\begin{align*}
z_{i}=
    \begin{cases}
    1 - \delta\ &w.p.\ \frac{1+\delta+\alpha}{2}\\
    -1 - \delta\ &w.p.\ \frac{1-\delta-\alpha}{2}
    \end{cases}
\end{align*}
where $\alpha>0$ is a parameter to be determined.

For any player strategy $\p$, we have $\E\p\cdot \z = \alpha$ and by Hoeffding's inequality
\[
Pr\bigl[\p\cdot\z < 0\bigr]\leq e^\frac{-\alpha^2}{2\|\p\|^2}\leq e^{-\alpha^2/2}
\]

Next we take the expectation of $\la_L(\s+\z,t+1)-\la_L(\s,t)$,
\begin{align*}
    &\E\left[\la_L(\s+\z,t+1)-\la_L(\s,t)\right]\\
    =&\frac{1}{N}\sum_{i=1}^N \frac{1+\delta}{2}f(s_i+\beta+1-\delta,t+1-\tau)+\frac{1-\delta}{2}f(s_i+\beta-1-\delta,t+1-\tau)-f(s_i+\beta,t-\tau)\\
    &+\frac{\alpha}{2N}\sum_{i=1}^Nf(s_i+\beta+1-\delta, t+1-\tau)-f(s_i+\beta-1-\delta, t+1-\tau)~.
\end{align*}

The second summation can be bounded by simply using the fact that $f\in[0,1]$. To estimate the first summation we use the Taylor expansion
\begin{align*}
    &\frac{1+\delta}{2}f(s_i+\beta+1-\delta,t+1-\tau)+\frac{1-\delta}{2}f(s_i+\beta-1-\delta,t+1-\tau)-f(s_i+\beta,t-\tau)\\
    =&\frac{1+\delta}{2}\left(f(s_i+\beta+1-\delta,t+1-\tau)-f(s_i+\beta,t+1-\tau)\right)\\
    &+\frac{1-\delta}{2}\left(f(s_i+\beta-1-\delta,t+1-\tau)-f(s_i+\beta,t+1-\tau)\right)\\
    &+f(s_i+\beta, t+1-\tau) - f(s_i+\beta,t-\tau)\\
    =& \frac{1-\delta^2}{2}f'(s_i+\beta, t+1-\tau) + \frac{(1-\delta^2)(1-\delta)}{4}f''(s_i+\beta, t+1-\tau)\\
    &-\frac{1-\delta^2}{2}f'(s_i+\beta, t+1-\tau) + \frac{(1-\delta^2)(1+\delta)}{4}f''(s_i+\beta, t+1-\tau) \\
    &+\partial_tf(s_i+\beta, t+1-\tau) + E_i\\
    =& \partial_tf(s_i+\beta, t+1-\tau) + \frac{(1-\delta^2)}{2}f''(s_i+\beta, t+1-\tau) + E_i\\
    =& -\frac{\delta^2}{2}f''(s_i+\beta, t+1-\tau) + E_i,
\end{align*}
where $E_i$ is the remainder term consisting of $\partial_{tt} f$ and $f^{(3)}$. Using Lemma \ref{lma: Control of higher order derivatives of g} and the definition of $\gamma$ in Equation \eqref{eqn: defn-of-gamma, V1 and V2} we can bound
\begin{align*}
    \begin{cases}
        \frac{\delta^2}{2}|f''(s_i+\beta, t+1-\tau)|\leq\frac{C\gamma}{|t+1-\tau|T}\\
        |E_i|\leq \frac{C}{|t+1-\tau|^\frac{3}{2}}+\frac{C}{|t+1-\tau|^2}~,
    \end{cases}
\end{align*}
for some constant $C$. 

By Hoeffding's inequality we also have
\[
Pr\bigl[\la_L(\s+\z,t+1)-\la_L(\s,t) < \E\la_L(\s+\z,t+1)-\la_L(\s,t) -\alpha/2\bigr]\leq e^{-\alpha^2 N / 8}.
\]
Thus when the number of chips $N>\frac{8}{\alpha^2}\log(\frac{1}{1-e^{-\alpha^2/2}})$ we get
\[
e^{-\alpha^2/2}+e^{-\alpha^2 N / 8}<1.
\]
As a consequence, there exists $\z=\left(z_1,\ldots,z_N\right)\in\{-1-\delta, 1-\delta\}^N$ s.t.
\begin{align*}
    \begin{cases}
    \p\cdot\z \geq 0\\
    \la_L(\s+\z,t+1-\tau)-\la_L(\s,t-\tau)\geq -\frac{C\gamma}{|t+1-\tau|T} -\frac{C}{(t+1-\tau)^2}-\frac{C}{|t+1-\tau|^\frac{3}{2}} - \alpha.
    \end{cases}
\end{align*}

Now we can bound the final loss from below
\begin{align*}
    \frac{1}{N}\sum_{i=1}^N L(s_{0,i})\geq\la_L(\s_0,0)&=\la_L(\bm{0}, -T)+\sum_{t=-T}^{-1}\la_L(\s_{t+1}, t+1)-\la_L(\s_t, t)\\
    &\geq\la_L(\bm{0}, -T)-\sum_{t=-T + 1 -\tau}^{-\tau}(\frac{C\gamma}{|t|T} + \frac{C}{|t|^2}+\frac{C}{|t|^\frac{3}{2}} + \alpha)\\
    &=\la_L(\bm{0}, -T)-\alpha T-O(\frac{\gamma\log\tau}{T} + \frac{1}{\sqrt{\tau}}+\frac{1}{\tau})~.
\end{align*}

Note that
\begin{align*}
    \la_L(\bm{0}, -T)&=f(\beta,-T-\tau)-f(\beta-\delta T, -\tau)\\
    &=\frac{1}{\sqrt{\pi}}\int^{\frac{\beta}{\sqrt{2\tau}}}_{\frac{\delta T + \beta}{\sqrt{2(T+\tau)}}}e^{-x^2}dx\\
    &\geq \frac{1}{\sqrt{\pi}}\int^{\frac{\beta}{\sqrt{2\tau}}}_{\frac{\delta T + \beta}{\sqrt{2T}}}e^{-x^2}dx~.
\end{align*}
We want $\frac{\beta}{\sqrt{2\tau}}\gg 1$ and $\frac{\beta}{\sqrt{2T}}\rightarrow 0$. This can be achieved by setting $\beta=\tau=T^{\frac{\theta}{2}}$ where $\theta\in(0,\frac{2}{3})$. Also we want $\alpha T\rightarrow 0$, so we can, for example, let $\alpha=T^{-1 - (\theta/4)}$. As a consequence, for $N>8T^{2+(\theta/2)}\log(1-\sqrt{e^{-T^{-2-(\theta/2)}}})^{-1}$ we have
\begin{align*}
    \frac{1}{N}\sum_{i=1}^N L(s_{0,i})&
    \geq\frac{1}{\sqrt{\pi}}\int^{T^{\theta/4}/\sqrt{2}}_{\sqrt{\gamma}+T^{\left(\theta-1\right)/2}/\sqrt{2}}e^{-x^2}dx-O(T^{-\theta/4})\\
    &=\frac{1}{\sqrt{\pi}}\int^{\infty}_{\sqrt{\gamma}}e^{-x^2}dx-O(T^{-\theta/4})~.
\end{align*}
\end{proof}

\section{Theorems and Proofs for V2}

We first control the derivatives of $\Tilde{g}$ as $t \rightarrow -\infty$, which is a simple corollary of Lemma \ref{lma: Control of higher order derivatives of g}.
\begin{lemma}\label{lma: Control of higher order derivatives of g tilda}
There exists constant $C$ s.t. $\|\tilde{g}'(\cdot,t)\|_{L^\infty(\mathbb{R}^+)}\leq\frac{C}{\sqrt{|t|}}$, $\|\tilde{g}''(\cdot,t)\|_{L^\infty(\mathbb{R}^+)}\leq\frac{C}{|t|}$, $\| \tilde{g}^{(3)}(\cdot,t)\|_{L^\infty(\mathbb{R}^+)}\leq\frac{C}{|t|^\frac{3}{2}}$, and $\|\partial_{tt}\tilde{g}(\cdot,t)\|_{L^\infty(\mathbb{R}^+)}\leq\frac{C}{|t|^2}$.
\end{lemma}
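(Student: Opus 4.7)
The plan is a one-line reduction: on the open half-line $\mathbb{R}^+=(0,\infty)$, the function $\tilde g$ is by its very definition equal to $2g$, so every claim about $\tilde g$ on $\mathbb{R}^+$ is just twice the corresponding claim about $g$. Hence the lemma reduces verbatim to Lemma~\ref{lma: Control of higher order derivatives of g}.

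More explicitly, I would first record the pointwise identity $\tilde g(s,t)=2g(s,t)$ for every $s>0$, and observe that differentiation in $s$ or in $t$ (both of which commute with multiplication by the constant $2$) yields $\tilde g^{(k)}(s,t)=2g^{(k)}(s,t)$ and $\partial_{tt}\tilde g(s,t)=2\partial_{tt}g(s,t)$ for all $s>0$. Taking $L^\infty$ norms over the half-line then gives, for each $k\in\{1,2,3\}$,
\[
\|\tilde g^{(k)}(\cdot,t)\|_{L^\infty(\mathbb{R}^+)}
= 2\,\|g^{(k)}(\cdot,t)\|_{L^\infty(\mathbb{R}^+)}
\le 2\,\|g^{(k)}(\cdot,t)\|_{L^\infty(\mathbb{R})},
\]
and analogously for $\partial_{tt}\tilde g$. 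Plugging in the four bounds from Lemma~\ref{lma: Control of higher order derivatives of g} on the right-hand side produces the four stated estimates with a new absolute constant equal to twice the one from the earlier lemma.

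There is essentially no obstacle; the only subtlety worth flagging is that $\tilde g$ is merely continuous at $s=0$ and fails to be even $C^1$ there, but the statement of the lemma takes $L^\infty$ norms only over $\mathbb{R}^+$, which excludes the singular point by design. Thus no separate argument is needed at $s=0$, and the proof is pure bookkeeping of constants.
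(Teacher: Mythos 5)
Your proposal is correct and is exactly the argument the paper intends: the text preceding the lemma says the bounds follow because $\tilde g$ is a constant multiple of $g$ on $\mathbb{R}^+$, so Lemma~\ref{lma: Control of higher order derivatives of g tilda} reduces to Lemma~\ref{lma: Control of higher order derivatives of g} with constant doubled. Your remark that the $L^\infty(\mathbb{R}^+)$ norm deliberately avoids the non-smooth point $s=0$ is also the right thing to note.
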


Equipped with the above lemma, we prove upper bound for game V2 with the proposed player strategy (\ref{stgy: continuous action set player strat}).

\begin{theorem}\label{thm:upper bound thm 2}
For any $\theta\in(0,\frac{2}{3})$, the player strategy specified in (\ref{stgy: continuous action set player strat}) with $\tau=T^{\theta}$ satisfies
\begin{align*}
    \frac{1}{N}\sum_{i=1}^N L(s_{0,i})\leq&
    \frac{2}{\sqrt{\pi}}\int^{\infty}_{\sqrt{\gamma}}e^{-x^2}dx+O(T^{-\frac{\theta}{2}})~,
\end{align*}
for any adversary strategy $\mathcal{A}$.
\end{theorem}

\begin{proof}[Proof of Theorem \ref{thm:upper bound thm 2}]
For the sake of simplicity, at any fixed time step $t$, we omit $t$ in the subscripts of $\s,\p,\z,E$. We write the increment of $\la_U$ from $t$ to $t+1$ as
\begin{align*}
    \la_U(\s+\z,t+1)-\la_U(\s,t) &= \frac{1}{N}\sum_{s_i>-\delta T, s_i+z_i>-\delta T} f(s_i+z_i,t+1-\tau) - f(s_i,t-\tau)\\
    &\qquad + \frac{1}{N}\sum_{s_i>-\delta T, s_i+z_i \leq -\delta T} f(s_i+z_i,t+1-\tau) - f(s_i,t-\tau)\\
    &\qquad + \frac{1}{N}\sum_{s_i\leq -\delta T} f(s_i+z_i,t+1-\tau) - f(s_i,t-\tau)\\
    &\leq \frac{1}{N}\sum_{s_i>-\delta T, s_i+z_i>-\delta T} f(s_i+z_i,t+1-\tau) - f(s_i,t-\tau)\\
    &\qquad+ \frac{1}{N}\sum_{s_i>-\delta T, s_i+z_i \leq -\delta T} f(s_i+z_i,t+1-\tau) - f(s_i,t-\tau)\\
    &=: A_1 + A_2~,
\end{align*}
where the inequality is due to the fact that
\[
\sum_{s_i\leq-\delta T} f(s_i+z_i,t+1-\tau) - f(s_i,t-\tau) \leq \sum_{s_i\leq-\delta T} f(s_i+z_i,t+1-\tau) - 1 \leq 0~.
\]

For $A_1$, note that we can apply Taylor expansion since $f(\cdot, t+1-\tau)$ is smooth between $s_i$ and $s_i+z_i$, and $f(s_i, \cdot)$ is smooth on $(t-\tau, t+1-\tau)$. Therefore
\begin{align*}
    A_1 =& \frac{1}{N}\sum_{s_i>-\delta T, s_i+z_i>-\delta T} f(s_i+z_i,t+1-\tau) - f(s_i,t-\tau) \\
    =& \frac{1}{N}\sum_{s_i>-\delta T, s_i+z_i>-\delta T} f(s_i+z_i,t+1-\tau) - f(s_i,t+1-\tau) + f(s_i,t+1-\tau) - f(s_i,t-\tau)\\
    =& \frac{1}{N}\sum_{s_i>-\delta T, s_i+z_i>-\delta T} f'(s_i ,t+1-\tau)z_i + \frac{1}{2} f''(s_i ,t+1-\tau)z_i^2 + \partial_tf(s_i, t+1-\tau)\\
    &+ \frac{1}{N}\sum_{s_i>-\delta T, s_i+z_i>-\delta T}E_i~,
\end{align*}
where $E_i$ is the remainder term consisting of $\partial_{tt} f$ and $f^{(3)}$.

For $A_2$, first note that when $s_i+z_i\leq -\delta T$,
\[
f(s_i+z_i,t+1-\tau) = 1 = f(-\delta T,t+1-\tau)~.
\]
Therefore we can write
\begin{align*}
    A_2 =& \frac{1}{N}\sum_{s_i>-\delta T, s_i+z_i \leq -\delta T} f(s_i+z_i,t+1-\tau) - f(s_i,t-\tau)\\
    =& \frac{1}{N}\sum_{s_i> -\delta T, s_i+z_i \leq -\delta T} f(-\delta T, t+1-\tau) - f(s_i,t-\tau)~.
\end{align*}
Since $f(\cdot, t+1-\tau)$ is smooth on $(-\delta T, s_i)$ and $f(s_i, \cdot)$ is smooth on $(t-\tau, t+1-\tau)$ we use Taylor expansion on $A_2$
\begin{align*}
    A_2 =& \frac{1}{N}\sum_{s_i> -\delta T, s_i+z_i \leq -\delta T} f(-\delta T, t+1-\tau) - f(s_i,t-\tau)\\
    =& \frac{1}{N}\sum_{s_i> -\delta T, s_i+z_i \leq -\delta T} f(-\delta T,t+1-\tau) - f(s_i,t+1-\tau) + f(s_i,t+1-\tau) - f(s_i,t-\tau)\\
    =& \frac{1}{N}\sum_{s_i>-\delta T, s_i+z_i \leq -\delta T} -f'(s_i ,t+1-\tau)(\delta T + s_i) + \frac{1}{2} f''(s_i ,t+1-\tau)(\delta T+s_i)^2 + \partial_tf(s_i, t+1-\tau)\\
    &+ \frac{1}{N}\sum_{s_i>-\delta T, s_i+z_i\leq -\delta T}E_i~,
\end{align*}
where $E_i$ is the remainder term consisting of $\partial_{tt} f$ and $f^{(3)}$. Note that $s_i> -\delta T, s_i+z_i \leq -\delta T$ implies $-z_i \geq s_i + \delta T > 0$.  Also since $f'(s_i, t+1-\tau)\leq 0$ and $f''(s_i, t+1-\tau) \geq 0$ when $s_i > -\delta T$, we have the following inequalities
\begin{align*}
    \begin{cases}
        -(\delta T + s_i)f'(s_i, t+1-\tau) \leq z_i f'(s_i, t+1-\tau)\\
        f''(s_i ,t+1-\tau)(\delta T+s_i)^2 \leq f''(s_i ,t+1-\tau)z_i^2~.
    \end{cases}
\end{align*}
As a consequence
\begin{align*}
A_2 \leq& \frac{1}{N}\sum_{s_i>-\delta T, s_i+z_i \leq -\delta T} f'(s_i ,t+1-\tau)z_i + \frac{1}{2} f''(s_i ,t+1-\tau)z_i^2 + \partial_tf(s_i, t+1-\tau)\\
&+ \frac{1}{N}\sum_{s_i>-\delta T, s_i+z_i\leq -\delta T}E_i~,
\end{align*}

Combining the inequalities for $A_1,A_2$, we get
\begin{align*}
    \la_U(\s+\z,t+1)-\la_U(\s,t) \leq& \frac{1}{N}\sum_{s_i>-\delta T} f'(s_i ,t+1-\tau)z_i + \frac{1}{2} f''(s_i ,t+1-\tau)z_i^2 + \partial_tf(s_i, t+1-\tau)\\
    &+ \frac{1}{N}\sum_{s_i>-\delta T}E_i~.\\
    =:& A_3+\frac{1}{N}\sum_{s_i>-\delta T}E_i~.
\end{align*}
For the remainder $\frac{1}{N}\sum_{s_i>-\delta T}E_i$, using Lemma \ref{lma: Control of higher order derivatives of g tilda} and the fact that $|z_i| \leq 1+\delta,i\in[N]$ we can conclude there exists a constant $C$ such that
\[
|\frac{1}{N}\sum_{s_i>-\delta T}E_i| \leq \sup_{s_i > -\delta T}|E_i| \leq \frac{C}{|t+1-\tau|^\frac{3}{2}}+\frac{C}{|t+1-\tau|^2}~.
\]

Recalling that the player set $p_i=-\frac{f'(s_i ,t+1-\tau)}{\sum_{s_j > -\delta T}f'(s_j, t+1-\tau)}$ for $s_i > -\delta T$ and that $\p\cdot\z \geq 0$, we have
\[
\sum_{s_i>-\delta T} f'(s_i ,t+1-\tau)z_i \leq 0~.
\]
Therefore
\begin{align*}
   A_3\leq& \frac{1}{N}\sum_{s_i>-\delta T} \partial_tf(s_i, t+1-\tau) + \frac{1}{2} f''(s_i ,t+1-\tau) + \frac{z_i^2-1}{2} f''(s_i ,t+1-\tau)\\
   \leq& \frac{1}{N}\sum_{s_i>-\delta T}\frac{z_i^2-1}{2} f''(s_i ,t+1-\tau)\\
   \leq& \frac{1}{N}\sum_{s_i>-\delta T}\frac{3\delta}{2} f''(s_i ,t+1-\tau)\\
   \leq& \frac{C\sqrt{\gamma}}{|t+1-\tau|\sqrt{T}}~.
\end{align*}
The second inequality used the fact that $f$ satisfies \eqref{eqn: backward nonlinear equation} when $s_i > -\delta T$; the third inequality used the fact that $f''(s_i, t+1-\tau) \geq 0$ and $z_i^2 -1 \leq 2\delta + \delta^2 \leq 3\delta$; the last inequality used Equation \eqref{eqn: defn-of-gamma, V1 and V2} and Lemma \ref{lma: Control of higher order derivatives of g tilda}.

Combining the above analysis together we now add up the increment of $\la_U$ from $-T$ to $-1$,
\begin{align*}
    \frac{1}{N}\sum_{i=1}^N L(s_{0,i})\leq\la_U(\s_0,0)&=\la_U(\bm{0}, -T)+\sum_{t=-T}^{-1}\la_U(\s_{t+1}, t+1)-\la_U(\s_t, t)\\
    &\leq\la_U(\bm{0}, -T)+C\sum_{t=-T+1-\tau}^{-\tau}\frac{\sqrt{\gamma}}{|t|\sqrt{T}} + \frac{1}{|t|^\frac{3}{2}}+\frac{1}{|t|^2}\\
    &\leq\la_U(\bm{0}, -T)+O\left(\frac{\log\tau\sqrt{\gamma}}{\sqrt{T}} + \frac{1}{\sqrt{\tau}}+\frac{1}{\tau}\right)~.
\end{align*}
The main temr on the right is 
\begin{align*}
    \la_U(\bm{0}, -T)=f(0,-T-\tau)=\frac{2}{\sqrt{\pi}}\int_{\delta T/\sqrt{2(T+\tau)}}^\infty e^{-x^2} dx
\end{align*}
We set $\tau=T^\theta$ where $\theta\in (0, \frac{2}{3})$, then
\begin{align*}
    \frac{1}{N}\sum_{i=1}^N L(s_{0,i})\leq&
    \frac{2}{\sqrt{\pi}}\int^{\infty}_{\sqrt{\frac{\gamma}{1+T^{\theta-1}}}}e^{-x^2}dx+O(T^{-\frac{\theta}{2}})\\
    =&\frac{2}{\sqrt{\pi}}\int^{\infty}_{\sqrt{\gamma}}e^{-x^2}dx+O(T^{-\frac{\theta}{2}})~.
\end{align*}
\end{proof}


Before proving lower bound we give another key ingredient.
\begin{lemma}\label{lma: combinatoric lemma}[Lemma 2 of \cite{ContExperts}]
For any sequence $a_1,\ldots,a_n$ belonging to $[0, U]$ for some constant $U>0$, the following holds
\begin{align*}
    \min_{P\subset [n]} |\sum_{i\in P}a_i - \sum_{j\in [n]\backslash P}a_j| \leq U~.
\end{align*}
\end{lemma}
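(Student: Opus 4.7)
The plan is a one-pass greedy assignment whose correctness is an easy induction. I would process the numbers $a_1,a_2,\ldots,a_n$ in order, maintaining a running signed sum $S_i$ that equals $\sum_{j\in P_i}a_j-\sum_{j\in[i]\setminus P_i}a_j$ for the partial partition $P_i\subset[i]$ constructed so far. Start with $S_0=0$. At step $i$, if $S_{i-1}\ge 0$, place $i$ in $[n]\setminus P$ and set $S_i=S_{i-1}-a_i$; if $S_{i-1}<0$, place $i$ in $P$ and set $S_i=S_{i-1}+a_i$. At the end, the partition $P=P_n$ realizes $|\sum_{i\in P}a_i-\sum_{j\in[n]\setminus P}a_j|=|S_n|$, so it suffices to show $|S_i|\le U$ for every $i$.

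I would prove the invariant $|S_i|\le U$ by induction on $i$. The base case $|S_0|=0$ is immediate. For the inductive step, suppose $|S_{i-1}|\le U$. If $S_{i-1}\in[0,U]$, then $S_i=S_{i-1}-a_i$, so $S_i\le S_{i-1}\le U$ and $S_i\ge -a_i\ge -U$, giving $|S_i|\le U$. The case $S_{i-1}\in[-U,0)$ is symmetric: $S_i=S_{i-1}+a_i\in[-U,a_i]\subset[-U,U]$. This completes the induction and hence the proof.

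There is no real obstacle here; the only thing to notice is that the invariant is preserved because $a_i\le U$ matches the tolerance $U$ of the bound we are trying to maintain. (In particular, the bound is sharp: taking $n=1$ and $a_1=U$ forces $|S_1|=U$ for either choice.) Since this lemma is cited from \cite{ContExperts} and is used in the paper only through its statement, a short self-contained proof of this greedy form is all that is needed.
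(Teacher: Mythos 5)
Your greedy one-pass argument is correct: the invariant $|S_i|\le U$ is maintained by a clean two-case induction that only needs $a_i\in[0,U]$, and noting the sharpness at $n=1$, $a_1=U$ is a nice confirmation that the bound cannot be improved. The paper itself does not supply a proof of this lemma — it is imported as Lemma 2 of \cite{ContExperts} and used only through its statement — so there is no in-paper argument to compare against; your self-contained greedy proof is the standard one for this statement and is exactly what one would write if the citation were to be replaced by a proof.
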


Now we give the proof of the lower bound
\begin{theorem}\label{thm: lower bound thm 2}
For any $T$ and $\theta\in(0, \frac{2}{3})$, if the number of chips $N \geq T^\frac{\theta+2}{4}$, then for any player strategy there exists an adversary strategy $\mathcal{A}$ associated with $\la_L$ with
\[
\beta=\tau=\lceil \frac{\delta T + T^{\frac{\theta}{2}}}{1-\delta} \rceil (1-\delta) -\delta T
\]
such that
\begin{equation*}
\frac{1}{N}\sum_{i=1}^N L(s_{0,i})\geq\frac{2}{\sqrt{\pi}}\int^{\infty}_{\sqrt{\gamma}}e^{-x^2}dx-O(T^{-\frac{\theta}{4}})~.
\end{equation*}
Moreover, $\mathcal{A}$ takes $z_i=0$ when $s_i \leq - \lceil \frac{\delta T + T^{\frac{\theta}{2}}}{1-\delta} \rceil (1-\delta)$ and $|z_i|=1-\delta$ otherwise. 
\end{theorem}
\begin{proof}[Proof of Theorem \ref{thm: lower bound thm 2}]
We consider an adversary that only takes $\{-1+\delta,0,1-\delta\}$.  With this choice of action set the chips always lie on multiples of $1-\delta$. Moreover, our adversary assigns $z_i=0$ whenever $s_i \leq -\delta T - \beta$ and $|z_i|=1-\delta$ otherwise.

We bound the increment of $\la_L$ as follows. First note that when $s_i \leq - \delta T - \beta$, the adversary chooses $z_i=0$, which implies that
\[
f(s_i+z_i+\beta,t+1-\tau)=f(s_i+\beta,t-\tau)=1~.
\]

As a consequence
\[
\la_L(\s+\z,t+1)-\la_L(\s,t)
    =\frac{1}{N}\sum_{s_i > - \delta T - \beta} f(s_i+z_i+\beta , t+1-\tau) - f(s_i +\beta, t-\tau)~.
\]

Also note that $s_i$ is a multiple of $1-\delta$ and by our choice of $\beta$, $- \delta T - \beta=-\lceil \frac{\delta T + T^{\frac{\theta}{2}}}{1-\delta}\rceil(1-\delta)$ is also a multiple of $1-\delta$. As a consequence when $s_i > - \delta T - \beta$ we have $s_i + z_i \geq s_i - (1-\delta) \geq - \delta T - \beta$. Therefore $f(\cdot, t+1-\tau)$ is smooth between $s_i+\beta$ and $s_i+z_i+\beta$, and $f(s_i+\beta,\cdot)$ is smooth on $(t-\tau, t+1-\tau)$. We apply Taylor expansion in the case  $s_i > - \delta T - \beta$
\begin{align*}
    &\frac{1}{N}\sum_{s_i > - \delta T - \beta} f(s_i + z_i + \beta , t+1-\tau) - f(s_i + \beta, t-\tau)\\
    =&\frac{1}{N}\sum_{s_i > - \delta T - \beta}\left(z_if'(s_i + \beta, t+1-\tau) + \partial_tf(s_i + \beta, t+1-\tau)+\frac{(1-\delta)^2}{2}f''(s_i + \beta, t+1-\tau)\right)\\
    &+\frac{1}{N}\sum_{s_i > - \delta T - \beta}E_i\\
    =&\frac{1}{N}\sum_{s_i > - \delta T - \beta}z_if'(s_i + \beta, t+1-\tau) + \frac{-2\delta+\delta^2}{2}f''(s_i + \beta, t+1-\tau) + \frac{1}{N}\sum_{s_i > - \delta T - \beta}E_i\\
    \geq& \frac{1}{N}\sum_{s_i > - \delta T - \beta}z_if'(s_i + \beta, t+1-\tau) - \delta f''(s_i + \beta, t+1-\tau) + \frac{1}{N}\sum_{s_i > - \delta T - \beta}E_i~,
\end{align*}
In the second equality we used the fact that $f$ satisfies Equation \eqref{eqn: backward nonlinear equation} and in the inequality we used the fact that $f''(s_i + \beta, t+1-\tau) \geq 0$.

$E_i$ is the remainder term consisting of $\partial_{tt} f$ and $f^{(3)}$. Using Lemma \ref{lma: Control of higher order derivatives of g tilda} and the fact that $|z_i| \leq 1-\delta$ there exists a constant $C$ such that
\[
|\frac{1}{N}\sum_{s_i > - \delta T - \beta}E_i| \leq \sup_{s_i > - \delta T - \beta}|E_i| \leq \frac{C}{|t+1-\tau|^\frac{3}{2}}+\frac{C}{|t+1-\tau|^2}~.
\]
We can bound the second order term
\[
- \delta f''(s_i + \beta, t+1-\tau) \geq -\frac{C\sqrt{\gamma}}{|t+1-\tau|\sqrt{T}}~,
\]
by Equation \eqref{eqn: defn-of-gamma, V1 and V2} and Lemma \ref{lma: Control of higher order derivatives of g tilda}.

To bound the first order term $\sum_{s_i > - \delta T - \beta}z_if'(s_i+\beta, t+1-\tau)$ we apply Lemma \ref{lma: combinatoric lemma}. More specifically, in our case $a_i=-f'(s_i+\beta, t+1-\tau)$ and $U=\frac{C}{\sqrt{|t+1-\tau|}}$ by Lemma \ref{lma: Control of higher order derivatives of g tilda}. Lemma \ref{lma: combinatoric lemma} confirms there exists a subset $P\subset\{i:s_i > - \delta T - \beta\}$ (note that we can always make the first inequality below holds by swapping $P$ and $\{k:s_k > - \delta T - \beta\}\backslash P$) such that
\begin{align*}
    \begin{cases}
        \sum_{i\in P}p_i - \sum_{j\in \{k:s_k > - \delta T - \beta\}\backslash P} p_j \geq 0\\
        |\sum_{i\in P}a_i - \sum_{j\in \{k:s_k > - \delta T - \beta\}\backslash P}a_j| \leq U
    \end{cases}
\end{align*}
Thus by setting $z_i=1-\delta$ for $i\in P$ and $z_i=-1+\delta$ for $i\in\{k:s_k > - \delta T - \beta\}\backslash P$, the adversary makes
\begin{align*}
    \begin{cases}
    \sum_{s_i > - \delta T - \beta}z_i f'(s_i+\beta, t+1-\tau) \geq - \frac{C}{\sqrt{|t+1-\tau|}}\\
    \sum_{s_i > - \delta T - \beta}p_i\cdot z_i \geq 0
    \end{cases}
\end{align*}
Moreover since $z_i=0$ for $s_i \leq - \delta T - \beta$, we have
\begin{align*}
    \begin{cases}
    \sum_{i}z_i f'(s_i+\beta, t+1-\tau) \geq - \frac{C}{\sqrt{|t+1-\tau|}}\\
    \p\cdot\z \geq 0
    \end{cases}
\end{align*}

As a consequence, we can bound the final loss from below
\begin{align*}
    \frac{1}{N}\sum_{i=1}^N L(s_{0,i}) \geq \la_L(\s_0,0)&=\la_L(\bm{0}, -T)+\sum_{t=-T}^{-1}\la_L(\s_{t+1}, t+1)-\la_L(\s_t, t)\\
     &\geq \la_L(\bm{0}, -T)-C\sum_{t=-T+1-\tau}^{-\tau}\frac{1}{N|t|^{\frac{1}{2}}}+\frac{\sqrt{\gamma}}{|t|\sqrt{T}}+\frac{1}{|t|^\frac{3}{2}}+\frac{1}{|t|^2}\\
    &= \la_L(\bm{0}, -T) - O\left(\frac{\sqrt{T+\tau}}{N}+\frac{\sqrt{\gamma}\log\tau}{\sqrt{T}}+\frac{1}{\sqrt{\tau}}+\frac{1}{\tau}\right)~.
\end{align*}

Now we compute the main term
\begin{align*}
    \la_L(\bm{0}, -T)=&f(\beta,-T-\tau) - f(-\delta T+\beta, -\tau)\\
    =&\frac{2}{\sqrt{\pi}}\int^{\frac{\beta}{\sqrt{2\tau}}}_{\frac{\delta T+\beta}{\sqrt{2(T+\tau)}}}e^{-x^2}dx\\
    \geq&\frac{2}{\sqrt{\pi}}\int^{\frac{\beta}{\sqrt{2\tau}}}_{\frac{\delta T+\beta}{\sqrt{2T}}}e^{-x^2}dx~.
\end{align*}

Finally plugging in $\beta=\tau=\lceil \frac{\delta T + T^{\frac{\theta}{2}}}{1-\delta} \rceil (1-\delta) -\delta T$ and $N=T^\frac{2+\theta}{4}$ for $\theta \in (0, \frac{2}{3})$, we get
\begin{align*}
    \frac{1}{N}\sum_{i=1}^N L(s_{0,i})
    &\geq\frac{2}{\sqrt{\pi}}\int^{T^{\theta/4}/\sqrt{2}}_{\sqrt{\gamma}+\left(T^\frac{\theta}{2}+1\right)/\sqrt{2T}}e^{-x^2}dx-O(T^{-\frac{\theta}{4}})\\
    &=\frac{2}{\sqrt{\pi}}\int^{\infty}_{\sqrt{\gamma}}e^{-x^2}dx-O(T^{-\frac{\theta}{4}})~.
\end{align*}
\end{proof}

\section{Game V3}\label{Sec: discrete action set 2}
In this section we consider game V3: $\delta = 0, \mathcal{Z} = \{\pm 1\}$, and $L(s)=\ind{s \leq -R}$ for some constant $R>0$. Game V3 is related to the {\em prediction with expert advice} game in which the experts make binary decisions. We give new player and adversary strategies for this game (thus also for {\em prediction with expert advice) and give (matching) upper and lower bounds which can be seen as the limit of the discrete bounds given in \cite{Cesa-Bianchi96} as $T\rightarrow\infty$. 

The technical details here are very similar to Section \ref{Sec: discrete action set}. We use 
\[
f(s,t)= g(s + R,t) = \frac{1}{\sqrt{\pi}}\int_{\frac{s + R}{\sqrt{-2t}}}^\infty e^{-x^2} ~,
\]
which satisfies the following PDE, 
\begin{align}\label{eqn: equation for discrete action set 2}
\begin{cases}
\partial_t f(s, t)+\frac{1}{2}f''(s, t) = 0\\
f(s, 0) = \ind{\s \leq - R}~,
\end{cases}
\end{align}
and serves as the building block.

In the statements of the theorems in this section $\gamma$ is defined as in Equation \eqref{eqn: defn-of-gamma, V3 and V4}, i.e.
\[
\gamma = \frac{R^2}{2T}~,
\]
and it is a constant as $T\rightarrow\infty$.

\subsection{Potentials}
We use a shifted version of $f$ for the upper bound potential $\la_U$.
\begin{align*}
    \la_U(\s, t) = \frac{1}{N}\sum_{i=1}^N f(s_i - \beta, t - \tau) + 1 - f(-R - \beta, -\tau)~,
\end{align*}
with $\beta=\tau=T^\frac{\theta}{2}$ for any $\theta\in(0,\frac{2}{3})$. Note that $\la_U$ is a decreasing function on all the spatial coordinates. This implies that $\p_t\sim-\nabla\la_U(\s, t+1)$ is a valid probability distribution.

We use a different shift of $f$ to construct the lower bound potential. More specifically, we define
\[
\la_L(\s, t) = \frac{1}{N} \sum_{i=1}^N f(s_i + \beta, t - \tau) - f(-R + \beta, -\tau)~.
\]
with $\beta=\tau=T^{\theta/2}$ for any $\theta\in(0,\frac{2}{3})$. Compared to the other binary case V1, the adversary's choices here ($\{\pm 1\}$) are symmetric about 0, so we can derive lower bound using Lemma \ref{lma: combinatoric lemma}, given $N \geq T^\frac{2+\theta}{4}$.

\subsection{Theorems and Proofs}
We first give the proof of upper bound 

\begin{theorem}\label{thm:upper bound thm 3'}
For any $\theta\in(0,\frac{2}{3})$ the player strategy proportional to the negative gradient of $\la_U$ with $\beta=\tau=T^\frac{\theta}{2}$ satisfies
\[
\frac{1}{N}\sum_{i=1}^N L(s_{0,i})\leq
    \frac{1}{\sqrt{\pi}}\int^{\infty}_{\sqrt{\gamma}}e^{-x^2}dx + O(T^{-\frac{\theta}{4}})~,
\]
for any adversary strategy $\mathcal{A}$. 
\end{theorem}

\begin{proof}[Proof of Theorem \ref{thm:upper bound thm 3'}]
We write the increment of $\la_U$ from $t$ to $t+1$ as
\begin{align*}
    &\la_U(\s+\z, t+1)-\la_U(\s, t)\\
    =& \frac{1}{N}\sum_{i=1}^N f(s_i-\beta+z_i,t+1-\tau) - f(s_i-\beta,t-\tau)\\
    =& \frac{1}{N}\sum_{i=1}^N f(s_i-\beta+z_i,t+1-\tau) - f(s_i-\beta,t+1-\tau) + f(s_i-\beta,t+1-\tau) - f(s_i-\beta,t-\tau)\\
    =& \frac{1}{N}\sum_{i=1}^N f'(s_i-\beta ,t+1-\tau)z_i + \frac{1}{2} f''(s_i-\beta ,t+1-\tau)z_i^2 + \partial_tf(s_i-\beta, t+1-\tau)+ \frac{1}{N}\sum_{i=1}^N E_i~.
\end{align*}
The remainder term $E_i$ consists of $f^{(3)}$ and $\partial_{tt}f$.

Recalling that the player set $p_i=-\frac{f'(s_i-\beta ,t+1-\tau)}{\sum_{j=1}^N f'(s_j-\beta, t+1-\tau)}$ and that $\p\cdot\z \geq 0$, we have
\[
\sum_{i=1}^N f'(s_i-\beta ,t+1-\tau)z_i \leq 0~.
\]
Therefore
\begin{align*}
   &\frac{1}{N}\sum_{i=1}^N f'(s_i-\beta ,t+1-\tau)z_i + \frac{1}{2} f''(s_i-\beta ,t+1-\tau)z_i^2 + \partial_tf(s_i-\beta, t+1-\tau)\\
   \leq& \frac{1}{N}\sum_{i=1}^N \partial_tf(s_i-\beta, t+1-\tau) + \frac{1}{2} f''(s_i-\beta ,t+1-\tau)\\
   \leq& 0~,
\end{align*}
where the first inequality used the fact that $|z_i|^2=1$ and the second inequality used Equation \eqref{eqn: equation for discrete action set 2}.

For the remainder $\frac{1}{N}\sum_{i=1}^N E_i$, using Lemma \ref{lma: Control of higher order derivatives of g} and the fact that $|z_i| \leq 1,i\in[N]$ we can conclude there exists a constant $C$ such that
\[
|\frac{1}{N}\sum_{i=1}^N E_i| \leq \sup_{i\in[N]}|E_i| \leq \frac{C}{|t+1-\tau|^\frac{3}{2}}+\frac{C}{|t+1-\tau|^2}~.
\]

Combining the above analysis together we now add up the increment of $\la_U$ from $-T$ to $-1$,
we have
\begin{align*}
    \frac{1}{N}\sum_{i=1}^N L(s_{0,i})\leq\la_U(\s_0,0)&=\la_U(\bm{0}, -T)+\sum_{t=-T}^{-1}\la_U(\s_{t+1}, t+1)-\la_U(\s_t, t)\\
    &\leq\la_U(\bm{0}, -T)+C\sum_{t=-T+1-\tau}^{-\tau}\frac{1}{|t|^2}+\frac{1}{|t|^\frac{3}{2}}\\
    &\leq\la_U(\bm{0}, -T)+O\left(\frac{1}{\sqrt{\tau}}+\frac{1}{\tau}\right)~.
\end{align*}
Note that
\begin{align*}
    \la_U(0, -T)&=f(-\beta,-T-\tau)+1-f(-R-\beta, -\tau)\\
    &=1-\frac{1}{\sqrt{\pi}}\int^{\frac{\beta}{\sqrt{2\tau}}}_{\frac{\beta-R}{\sqrt{2(T+\tau)}}}e^{-x^2}dx~.
\end{align*}
Plugging in $\beta=\tau=T^\frac{\theta}{2}$ for $\theta \in (0, \frac{2}{3})$ we get
\begin{align*}
    \frac{1}{N}\sum_{i=1}^N L(s_{0,i})
    &\leq
    1-\frac{1}{\sqrt{\pi}}\int^{T^{\theta/4}/\sqrt{2}}_{-\sqrt{\gamma/\left(1 + T^{\theta/2 - 1}\right)}+\frac{T^{\left(\theta-1\right)/2}}{\sqrt{2\left(1 + T^{\theta/2 - 1}\right)}}}e^{-x^2}dx+O(T^{-\frac{\theta}{4}})\\
    &=1-\frac{1}{\sqrt{\pi}}\int^{\infty}_{-\sqrt{\gamma/\left(1 + T^{\theta/2 - 1}\right)}}e^{-x^2}dx+O(T^{-\frac{\theta}{4}})\\
    &=1-\frac{1}{\sqrt{\pi}}\int^{\infty}_{-\sqrt{\gamma}}e^{-x^2}dx+O(T^{-\frac{\theta}{4}})\\
    &=\frac{1}{\sqrt{\pi}}\int^{\infty}_{\sqrt{\gamma}}e^{-x^2}dx+O(T^{-\frac{\theta}{4}})~.    
\end{align*}
\end{proof}

Next we give the proof of lower bound
\begin{theorem}\label{thm: lower bound thm 3'}
For any $T$ and any $\theta\in(0,\frac{2}{3})$, if the number of chips satisfies
\[
N \geq T^\frac{2+\theta}{4}~,
\]
then for any player strategy there exists an adversary strategy $\mathcal{A}$ associated with $\la_L$ (using $\beta=\tau=T^{\theta/2}$) such that
\[
    \frac{1}{N}\sum_{i=1}^N L(s_{0,i})\geq\frac{1}{\sqrt{\pi}}\int^{\infty}_{\sqrt{\gamma}}e^{-x^2}dx-O(T^{-\theta/4})~.
\]
\end{theorem}

\begin{proof}[Proof of Theorem \ref{thm: lower bound thm 3'}]
We bound the increment of $\la_L$ using Taylor expansion 
\begin{align*}
    &\la_L(\s+\z,t+1)-\la_L(\s,t)\\
    =&\frac{1}{N}\sum_{i=1}^N f(s_i+z_i+\beta , t+1-\tau) - f(s_i +\beta, t-\tau)\\
    =&\frac{1}{N}\sum_{i=1}^N\left(z_if'(s_i + \beta, t+1-\tau) + \partial_tf(s_i + \beta, t+1-\tau)+\frac{1}{2}f''(s_i + \beta, t+1-\tau)\right)\\
    &+\frac{1}{N}\sum_{i=1}^NE_i\\
    =&\frac{1}{N}\sum_{i=1}^Nz_if'(s_i + \beta, t+1-\tau) + \frac{1}{N}\sum_{i=1}^NE_i~,
\end{align*}
In the second equality we used the fact that $|z_i|=1$ and in the last equality we used the fact that $f$ satisfies Equation \eqref{eqn: equation for discrete action set 2}.
The remainder term $E_i$ consists of $f^{(3)}(\cdot, t+1-\tau)$ and $\partial_{tt}f(s_i-\beta, \cdot)$. Using Lemma \ref{lma: Control of higher order derivatives of g} and the fact that $|z_i| = 1$ there exists a constant $C$ such that
\[
|\frac{1}{N}\sum_{i=1}^N E_i| \leq \sup_{i\in[N]}|E_i| \leq \frac{C}{|t+1-\tau|^\frac{3}{2}}+\frac{C}{|t+1-\tau|^2}~.
\]

To bound the first order term $\sum_{i=1}^Nz_if'(s_i+\beta, t+1-\tau)$ we apply Lemma \ref{lma: combinatoric lemma}. More specifically, in our case $a_i=-f'(s_i+\beta, t+1-\tau)$ and $U=\frac{C}{\sqrt{|t+1-\tau|}}$ by Lemma \ref{lma: Control of higher order derivatives of g}. Lemma \ref{lma: combinatoric lemma} confirms there exists a subset $P\subset[N]$ (note that we can always make the first inequality below holds by swapping $P$ and $[N]\backslash P$) such that 
\begin{align*}
    \begin{cases}
        \sum_{i\in P}p_i - \sum_{j\in [N]\backslash P} p_j \geq 0\\
        |\sum_{i\in P}a_i - \sum_{j\in [N]\backslash P}a_j| \leq U
    \end{cases}
\end{align*}
Thus by setting $z_i=1$ for $i\in P$ and $z_i=-1$ for $i\in[N]\backslash P$, the adversary can arrange that
\begin{align*}
    \begin{cases}
    \sum_{i=1}^Nz_i f'(s_i+\beta, t+1-\tau) \geq - \frac{C}{\sqrt{|t+1-\tau|}}\\
    \sum_{i=1}^Np_i\cdot z_i \geq 0~.
    \end{cases}
\end{align*}

\noindent
As a consequence, we bound the final loss from below:
\begin{align*}
    \frac{1}{N}\sum_{i=1}^N L(s_{0,i})\geq\la_L(\s_0,0)&=\la_L(\bm{0}, -T)+\sum_{t=-T}^{-1}\la_L(\s_{t+1},t+1)-\la_L(\s_t,t)\\
    &\geq \la_L(\bm{0}, -T)-C\sum_{t=-T+1-\tau}^{-\tau}\frac{1}{N|t|^{\frac{1}{2}}}+\frac{1}{|t|^\frac{3}{2}}+\frac{1}{|t|^2}\\
    &=\la_L(\bm{0}, -T) - O\left(\frac{\sqrt{T+\tau}}{N} + \frac{1}{\sqrt{\tau}}+\frac{1}{\tau}\right)~.
\end{align*}

\noindent
Now we compute the main term
\begin{align*}
    \la_L(\bm{0}, -T)=&f(\beta,-T-\tau) - f(-R+\beta, -\tau)\\
    =&g(\beta + R, -T-\tau) - g(\beta, -\tau)\\
    =&\frac{1}{\sqrt{\pi}}\int^{\frac{\beta}{\sqrt{2\tau}}}_{\frac{R+\beta}{\sqrt{2(T+\tau)}}}e^{-x^2}dx\\
    \geq& \frac{1}{\sqrt{\pi}}\int^{\frac{\beta}{\sqrt{2\tau}}}_{\frac{R+\beta}{\sqrt{2T}}}e^{-x^2}dx~.
\end{align*}

\noindent
Finally plugging in $\beta=\tau=T^{\frac{\theta}{2}}$ and $N=T^\frac{2+\theta}{4}$ for $\theta\in(0, \frac{2}{3})$ we get,
\begin{align*}
    \frac{1}{N}\sum_{i=1}^N L(s_{0,i})
    &\geq\frac{1}{\sqrt{\pi}}\int^{T^{\frac{\theta}{4}}/\sqrt{2}}_{\sqrt{\gamma}+T^\frac{\theta-1}{2}/\sqrt{2}}e^{-x^2}dx-O(T^{-\frac{\theta}{4}})\\
    &=\frac{1}{\sqrt{\pi}}\int^{\infty}_{\sqrt{\gamma}}e^{-x^2}dx-O(T^{-\frac{\theta}{4}})~.
\end{align*}
\end{proof}

\section{Game V4}\label{Appendix: continuous action set variant 2}
In this section we consider game V4: $\delta = 0, \mathcal{Z} = [-1,1]$, and $L(s)=\ind{s \leq -R}$ for some constant $R>0$. This game is related to a \emph{prediction with expert advice} setting in which each expert's prediction takes a value on the interval $[-1,1]$; it was considered in \cite{ContExperts} and is also related to the \emph{hedge game} \cite{HedgeGame}. We give new player and adversary strategies for this game (thus also for the two related games) and give (matching) upper and lower bounds which can be seen as the limit of the discrete bounds given in \cite{ContExperts} as $T\rightarrow\infty$.


Motivated by the upper bound potential defined in Section \ref{subsec:potentials of V2}, we define
\begin{align*}
f(s,t) = \tilde{g}(s + R, t)~,
\end{align*}
which solves
\begin{align}\label{eqn: equation for continuous action set 2}
\begin{cases}
\partial_t f(s, t)+\frac{1}{2}\max(f''(s, t),0) = 0\\
f(s, 0) =  \ind{\s \leq - R}~.
\end{cases}
\end{align}
and serves as the building block.

As in Appendix \ref{Sec: discrete action set 2}, in this Appendix
\[
\gamma := \frac{R^2}{2T}~,
\]
and $\gamma$ is held constant as $T\rightarrow \infty$.

\subsection{Potentials}
We define the upper bound potential $\la_U$ using a shifted version of $f$
\[
\la_U(\s,t) = \frac{1}{N}\sum_{i=1}^N f(s_i, t-\tau)~,
\]
where $\tau=T^\theta$ for any $\theta \in (0,\frac{2}{3})$. $\la_U$ is a decreasing function in all the spatial variables.

The player imposes the following distribution: if at least one chip is on the right of $-R$,
\begin{align}\label{stgy: player strat}
    p_{i} \sim
    \begin{cases}
        0&s_i \leq -R\\
        -f'(s_i, t+1-\tau)&s_i > -R~;
    \end{cases}
\end{align}
otherwise if all the chips are on the left of $-R$, any probability distribution is fine.

We define the lower bound potential $\la_L$ as
\[
\la_L(\s, t) = \frac{1}{N} \sum_{i=1}^N f(s_i + \beta, t - \tau) - f(-R + \beta, -\tau)~,
\]
with $\beta=\tau=\lceil R + T^{\theta/2} \rceil - R$ for any $\theta \in (0, \frac{2}{3})$. We will again use Lemma \ref{lma: combinatoric lemma} to derive a lower bound, provided $N \geq T^\frac{2+\theta}{4}$.

\subsection{Theorems and Proofs}
We first give the proof of upper bound
\begin{theorem}\label{thm:upper bound thm 3}
For any $\theta\in(0, \frac{2}{3})$, the player strategy following (\ref{stgy: player strat}) with $\tau=T^\theta$ satisfies
\begin{align*}
    \frac{1}{N}\sum_{i=1}^N L(s_{0,i})\leq&
    \frac{2}{\sqrt{\pi}}\int^{\infty}_{\sqrt{\gamma}}e^{-x^2}dx+O(T^{-\frac{\theta}{2}})~,
\end{align*}
for any adversary strategy $\mathcal{A}$. 
\end{theorem}

\begin{proof}[Proof of Theorem \ref{thm:upper bound thm 3}]
We follow the proof in Theorem \ref{thm:upper bound thm 2} and replace $\delta T$ by $R$. The increment of the potential from $t$ to $t+1$ is bounded as
\begin{align*}
    \la_U(\s+\z,t+1)-\la_U(\s,t) \leq& \frac{1}{N}\sum_{s_i>-R, s_i+z_i>-R} f(s_i+z_i,t+1-\tau) - f(s_i,t-\tau)\\
    &+ \frac{1}{N}\sum_{s_i>-R, s_i+z_i \leq -R} f(s_i+z_i,t+1-\tau) - f(s_i,t-\tau)\\
    =&: A_1 + A_2~.
\end{align*}

For $A_1$, we apply Taylor expansion and get
\begin{align*}
    A_1 =& \frac{1}{N}\sum_{s_i>-R, s_i+z_i>-R} f'(s_i ,t+1-\tau)z_i + \frac{1}{2} f''(s_i ,t+1-\tau)z_i^2 + \partial_tf(s_i, t+1-\tau)\\
    &+ \frac{1}{N}\sum_{s_i>-R, s_i+z_i>-R}E_i~,
\end{align*}
where $E_i$, in the case of $s_i>-R, s_i+z_i>-R$, is the remainder term consisting of $f^{(3)}$ and $\partial_{tt}f$.

For $A_2$, we can write
\begin{align*}
    A_2 =& \frac{1}{N}\sum_{s_i>-R, s_i+z_i \leq -R} f'(s_i ,t+1-\tau)(-R - s_i) + \frac{1}{2} f''(s_i ,t+1-\tau)(-R-s_i)^2 + \partial_tf(s_i, t+1-\tau)\\
    &+ \frac{1}{N}\sum_{s_i>-R, s_i+z_i \leq -R}E_i\\
    \leq &\frac{1}{N}\sum_{s_i>-R, s_i+z_i \leq -R} f'(s_i ,t+1-\tau)z_i + \frac{1}{2} f''(s_i ,t+1-\tau)z_i^2 + \partial_tf(s_i, t+1-\tau)\\
    &+ \frac{1}{N}\sum_{s_i>-R, s_i+z_i \leq -R}E_i~.
\end{align*}
The remainder term $E_i$ consists of $f^{(3)}$ and $\partial_{tt}f$, and the inequality holds since
\begin{align*}
    \begin{cases}
        -(R + s_i)f'(s_i, t+1-\tau) \leq z_i f'(s_i, t+1-\tau)\\
        f''(s_i ,t+1-\tau)(R +s_i)^2 \leq f''(s_i ,t+1-\tau)z_i^2~.
    \end{cases}
\end{align*}

As a consequence,
\begin{align*}
    \la_U(\s+\z,t+1)-\la_U(\s,t) \leq& \frac{1}{N}\sum_{s_i>-R} f'(s_i ,t+1-\tau)z_i + \frac{1}{2} f''(s_i ,t+1-\tau)z_i^2 + \partial_tf(s_i, t+1-\tau)\\
    &+ \frac{1}{N}\sum_{s_i>-R}E_i~.\\
    \leq& \left(\sum_{s_i>-R} \partial_tf(s_i, t+1-\tau) + \frac{1}{2} f''(s_i ,t+1-\tau) \right) +\frac{1}{N}\sum_{s_i>-R}E_i\\
    \leq& \frac{1}{N}\sum_{s_i>-R}E_i~.
\end{align*}
The second inequality used the fact that
\begin{align*}
    \begin{cases}
        \sum_{s_i>-R} f'(s_i ,t+1-\tau)z_i \leq 0\\
        f''(s_i ,t+1-\tau)z_i^2 \leq f''(s_i ,t+1-\tau)~for~s_i > -R~;
    \end{cases}
\end{align*}
and the last inequality holds since $f$ satisfies Equation (\ref{eqn: equation for continuous action set 2}).

Using Lemma \ref{lma: Control of higher order derivatives of g tilda} and the fact that $|z_i|\leq 1,i\in[N]$ we get
\[
|\frac{1}{N}\sum_{s_i>-R}E_i| \leq \sup_{s_i > -R}|E_i| \leq \frac{C}{|t+1-\tau|^\frac{3}{2}}+\frac{C}{|t+1-\tau|^2}
\]

We now repeat the calculation done in Theorem \ref{thm:upper bound thm 2},
\begin{align*}
    \frac{1}{N}\sum_{i=1}^N L(s_{0,i})\leq\la_U(\s_0,0)&=\la_U(\bm{0}, -T)+\sum_{t=-T}^{-1}\la_U(\s_{t+1}, t+1)-\la_U(\s_t, t)\\
    &\leq\la_U(\bm{0}, -T)+C\sum_{t=-T-\tau}^{-\tau}\frac{1}{|t|^2}+\frac{1}{|t|^\frac{3}{2}}\\
    &\leq\la_U(\bm{0}, -T)+O(\frac{1}{\sqrt{\tau}}+\frac{1}{\tau})\\
    &=\la_U(\bm{0}, -T)+O(\frac{1}{\sqrt{\tau}}+\frac{1}{\tau})
\end{align*}
Note that
\begin{align*}
    \la_U(\bm{0}, -T)=f(0,-T-\tau)=\frac{2}{\sqrt{\pi}}\int_{\frac{R}{\sqrt{2(T+\tau)}}}^\infty e^{-x^2} dx
\end{align*}
We set $\tau=T^\theta$ where $\theta\in(0,1)$, then
\begin{align*}
    \frac{1}{N}\sum_{i=1}^N L(s_{0,i})\leq&
    \frac{2}{\sqrt{\pi}}\int^{\infty}_{\sqrt{\frac{\gamma T}{T+T^\theta}}}e^{-x^2}dx+O(T^{-\frac{\theta}{2}})\\
    =&\frac{2}{\sqrt{\pi}}\int^{\infty}_{\sqrt{\gamma}}e^{-x^2}dx+O(T^{-\frac{\theta}{2}})~.
\end{align*}
\end{proof}

Next we give the proof of lower bound
\begin{theorem}\label{thm: lower bound thm 3}
For any $T$ and $\theta\in(0,\frac{2}{3})$, if the number of chips $N \geq T^\frac{2+\theta}{4}$ then for any player strategy there exists an adversary strategy $\mathcal{A}$ associated with $\la_L$ (using $\beta=\tau=\lceil R + T^{\theta/2} \rceil - R$) such that
\begin{equation*}
    \frac{1}{N}\sum_{i=1}^N L(s_{0,i})\geq\frac{2}{\sqrt{\pi}}\int^{\infty}_{\sqrt{\gamma}}e^{-x^2}dx-O(T^{-\frac{\theta}{4}})~.
\end{equation*}
Moreover, $\mathcal{A}$ takes $z_i=0$ when chip $i$ is on the left of $-\lceil R + T^\frac{\theta}{2}\rceil$, and takes $|z_i|=1$ otherwise. 
\end{theorem}

\begin{proof}[Proof of Theorem \ref{thm: lower bound thm 3}]
We consider the adversary that only takes $\{-1,0,1\}$. With this choice of action set the chips always lie on integer points. Moreover, the adversary assigns $z_i=0$ whenever $s_i \leq - R - \beta$ and $|z_i|=1$ otherwise. 

We bound the increment of $\la_L$ as following. First note that when $s_i \leq - R - \beta$, the adversary chooses $z_i=0$, which implies that
\[
f(s_i+z_i+\beta,t+1-\tau)=f(s_i+\beta,t-\tau)=1~.
\]

As a consequence
\[
\la_L(\s+\z,t+1)-\la_L(\s,t)
    =\frac{1}{N}\sum_{s_i > - R - \beta} f(s_i+z_i+\beta , t+1-\tau) - f(s_i +\beta, t-\tau)~.
\]

Also note that $s_i$ is an integer and by our choice of $\beta$, $R+\beta=\lceil R+T^{\frac{\theta}{2}} \rceil$ is also an integer. Therefore when $s_i > - R - \beta$ we have $s_i + z_i \geq s_i - 1 \geq - R - \beta$. As a consequence $f(\cdot, t+1-\tau)$ is smooth between $s_i+\beta$ and $s_i+z_i+\beta$, and $f(s_i+\beta,\cdot)$ is smooth on $(t-\tau, t+1-\tau)$. We apply Taylor expansion in the case  $s_i > - R - \beta$
\begin{align*}
    &\frac{1}{N}\sum_{s_i > - R - \beta} f(s_i + z_i + \beta , t+1-\tau) - f(s_i + \beta, t-\tau)\\
    =&\frac{1}{N}\sum_{s_i > - R - \beta}\left(z_if'(s_i + \beta, t+1-\tau) + \partial_tf(s_i + \beta, t+1-\tau)+\frac{1}{2}f''(s_i + \beta, t+1-\tau)\right)\\
    &+\frac{1}{N}\sum_{s_i > - R - \beta}E_i\\
    =&\frac{1}{N}\sum_{s_i > - R - \beta}z_if'(s_i + \beta, t+1-\tau) + \frac{1}{N}\sum_{s_i > - R - \beta}E_i~,
\end{align*}
In the second equality we used the fact that $f$ satisfies Equation \ref{eqn: equation for continuous action set 2}.
The remainder term $E_i$ consists of $f^{(3)}(\cdot, t+1-\tau)$ and $\partial_{tt}f(s_i+\beta, \cdot)$. Using Lemma \ref{lma: Control of higher order derivatives of g tilda} and the fact that $|z_i| \leq 1$ there exists a constant $C$ such that
\[
|\frac{1}{N}\sum_{s_i > - R - \beta}E_i| \leq \sup_{s_i > - R - \beta}|E_i| \leq \frac{C}{|t+1-\tau|^\frac{3}{2}}+\frac{C}{|t+1-\tau|^2}~.
\]

To bound the first order term $\sum_{s_i > - R - \beta}z_if'(s_i+\beta, t+1-\tau)$ we apply Lemma \ref{lma: combinatoric lemma}. More specifically, in our case $a_i=-f'(s_i+\beta, t+1-\tau)$ and $U=\frac{C}{\sqrt{|t+1-\tau|}}$ by Lemma \ref{lma: Control of higher order derivatives of g tilda}. Lemma \ref{lma: combinatoric lemma} confirms there exists a subset $P\subset\{k:s_k > - R - \beta\}$ (note that we can always make the first inequality below holds by swapping $P$ and $\{k:s_k > - R - \beta\}\backslash P$) such that
\begin{align*}
    \begin{cases}
        \sum_{i\in P}p_i - \sum_{j\in \{k:s_k > - R - \beta\}\backslash P} p_j \geq 0\\
        |\sum_{i\in P}a_i - \sum_{j\in \{k:s_k > - R - \beta\}\backslash P}a_j| \leq U
    \end{cases}
\end{align*}
Thus by setting $z_i=1$ for $i\in P$ and $z_i=-1$ for $i\in\{j:s_j > - R - \beta\}\backslash P$, the adversary makes
\begin{align*}
    \begin{cases}
    \sum_{s_i > - R - \beta}z_i f'(s_i+\beta, t+1-\tau) \geq - \frac{C}{\sqrt{|t+1-\tau|}}\\
    \sum_{s_i > - R - \beta}p_i\cdot z_i \geq 0
    \end{cases}
\end{align*}
Moreover since $z_i=0$ for $s_i \leq - R - \beta$, we have
\begin{align*}
    \begin{cases}
    \sum_{i}z_i f'(s_i+\beta, t+1-\tau) \geq - \frac{C}{\sqrt{|t+1-\tau|}}\\
    \p\cdot\z \geq 0
    \end{cases}
\end{align*}

As a consequence, we can bound the final loss from below
\begin{align*}
    \frac{1}{N}\sum_{i=1}^N L(s_{0,i}) \geq \la_L(\s_0,0)&=\la_L(\bm{0}, -T)+\sum_{t=-T}^{-1}\la_L(\s_{t+1}, t+1)-\la_L(\s_t, t)\\
     &\geq \la_L(\bm{0}, -T)-C\sum_{t=-T+1-\tau}^{-\tau}\frac{1}{N|t|^{\frac{1}{2}}}+\frac{1}{|t|^\frac{3}{2}}+\frac{1}{|t|^2}\\
    &= \la_L(\bm{0}, -T) - O\left(\frac{\sqrt{T+\tau}}{N}+\frac{1}{\sqrt{\tau}}+\frac{1}{\tau}\right)~.
\end{align*}

Now we compute the main term
\begin{align*}
    \la_L(0, -T)=&f(\beta,-T-\tau) - f(-R+\beta, -\tau)\\
    =&\frac{2}{\sqrt{\pi}}\int^{\frac{\beta}{\sqrt{2\tau}}}_{\frac{R+\beta}{\sqrt{2(T+\tau)}}}e^{-x^2}dx\\
    \geq& \frac{2}{\sqrt{\pi}}\int^{\frac{\beta}{\sqrt{2\tau}}}_{\frac{R+\beta}{\sqrt{2T}}}e^{-x^2}dx~.
\end{align*}

Finally plugging in $\beta=\tau=\lceil R+T^{\frac{\theta}{2}}\rceil - R$ and $N=T^\frac{2+\theta}{4}$ for $\theta\in(0,\frac{2}{3})$ we get,
\begin{align*}
    \frac{1}{N}\sum_{i=1}^N L(s_{0,i})
    &\geq\frac{2}{\sqrt{\pi}}\int^{T^{\frac{\theta}{4}}/\sqrt{2}}_{\sqrt{\gamma}+\frac{T^\frac{\theta}{2}+1}{\sqrt{2T}}}e^{-x^2}dx-O(T^{-\frac{\theta}{4}})\\
    &=\frac{2}{\sqrt{\pi}}\int^{\infty}_{\sqrt{\gamma}}e^{-x^2}dx-O(T^{-\frac{\theta}{4}})~.
\end{align*}
\end{proof}

\section{Heuristic PDE Derivations}\label{Sec: heuristic pde derivations}

This paper has thus far considered a restricted class of drifting games, in which the moves $\z \in \R^N$
are restricted to a set of the form $\Z \times \cdots \times \Z$, where $\Z \subset \R$, and the final loss has the form $\frac{1}{N}\sum_{i=1}^N L(s_i)$. It is natural
to ask what becomes of our PDE-based approach when the set of permitted moves does not have this
structure. This section offers some thoughts in that direction.

Our method is to consider a \emph{scaled} version of the game, scaling moves by $\ep$ and time by $\ep^2$
where $\ep = 1/\sqrt{T}$, and to assume that the scaled minimax loss has a limit at $\ep \rightarrow 0$. This is
not a new idea: our scaling is the same as the one used in \cite{BrownBoost}, \cite{FREUND2002113},
and \cite{Boosting_book}.
It is also like the one used to study \emph{prediction with expert advice}
in \cite{drenska2019prediction}, where a heuristic calculation analogous to the
one in this section was given a mathematically rigorous justification.

We start, in Section \ref{subsec:7.1}, by introducing the scaled game; then, in
Section \ref{subsec:7.2}, we derive a nonlinear PDE
that (conjecturally) describes its limiting behavior, provided the set of permitted moves contains a neighborhood of the origin. When this last
hypothesis fails we do not find an asymptotic PDE, however we do find a PDE that the
player might reasonably use to determine a strategy; this is the focus of Section
\ref{subsec:7.3}.

\subsection{The Scaled Game} \label{subsec:7.1}

The drifting games we consider in this section are in most respects the same as introduced in the
Introduction. The only changes are:

\begin{itemize}
\item the set of possible moves $\Z_N$ is a bounded subset of $\R^N$ (where $N$ is the number
of chips); and
\item the final loss, denoted as $L_N(\s)$, is scaling invariant and is a decreasing function of each variable.
\end{itemize}

\noindent
Before scaling, the game's minimax loss is determined by the analogue of
Equation \eqref{eqn:DPP for original game}:
\begin{equation}\label{eqn:DPP for generalized game}
\la^d_\delta (\s,t)=\min_\p\max_{\z\in S_\delta(\p)}\la^d_\delta (\s+\z,t+1) \quad
\mbox{for $t \leq -1$} ,
\end{equation}
with the obvious extension of our previous notation
\begin{equation} \label{eqn:S-delta}
S_\delta (\p) = \{ \z\in {\Z}_N | \p \cdot \z \geq \delta\}
\end{equation}
and the final-time condition
\begin{equation} \label{eqn:final-time-condition-for-generalized-game}
\la^d_\delta(\s,0)= L_N(\s).
\end{equation}

Our goal is to understand the limiting behavior of $\la^d_\delta (\mathbf{0},-T)$ in the limit
when $T \rightarrow \infty$ and $\delta \rightarrow 0$ with $\delta = \sqrt{2 \gamma/T}$. With this in mind,
we set $\ep = 1/\sqrt{T}$ and introduce the scaled position and time variables
$$
\boldsigma = \ep \s, \quad \tau = \ep^2 t
$$
and the scaled minimax loss
\begin{equation}
\la^\ep (\boldsigma, \tau) = \la^d_\delta \left( \frac{\boldsigma}{\ep}, \frac{\tau}{\ep^2} \right).
\end{equation}
A moment's thought reveals that the dynamic programming principle defining $\la^d_\delta$ is
equivalent to
\begin{equation}\label{eqn:DPP for scaled game}
\la^\ep (\boldsigma,\tau)=\min_\p\max_{\z\in S_{\delta_\ep}(\p)}\la^\ep (\boldsigma +\ep\z,t+\ep^2)
\end{equation}
where $S_{\delta_\ep}$ is defined by \eqref{eqn:S-delta} with the parameter $\delta$ set equal to
\begin{equation} \label{eqn:delta-eps}
\delta_\ep = \sqrt{2\gamma} \ep
\end{equation}
and the final-time condition is
$$
\la^\ep (\boldsigma,0)= L_N(\boldsigma) .
$$
(We use here the \emph{scale-invariance} of $L_N$, i.e. the assumption that its
value at $\boldsigma\in\mathbb{R}^N$ is the same as its value at $\boldsigma/\ep$ for any $\ep > 0$). One can view $\la^\ep$
as the minimax loss of a scaled version of the drifting game, in which the permitted moves at a given
step are the vectors $\ep \z$ where $\z \in S_{\delta_\ep} (p)$. Note that the function $\la^\ep(\boldsigma, \tau)$
is defined when $\tau$ is a negative integer multiple of $\ep^2$, and understanding
$\la^d_\delta (\mathbf{0},-T)$
as $T \rightarrow \infty$ is equivalent to understanding $\lim_{\ep \rightarrow 0} \la^\ep (\mathbf{0},-1)$.

The preceding discussion used the hypothesis that $\delta = \sqrt{2 \gamma/T}$, which we
justified heuristically in Section \ref{subsec:2.2}. Let us offer here another argument why
$\delta_\ep$ should depend linearly on $\ep$. At the final time of the scaled game, the
final-time loss $L_N$ is evaluated at
$\ep(\z_{-1} + \z_{-1+\ep^2} + \ldots + \z_{-\ep^2})$. Since the adversary must choose $\z$ such that
$\p \cdot \z \geq \delta_\ep$ at each step, the bias introduced by $\delta_\ep$ at a single step is
of order $\ep \delta_\ep$ and this bias accumulates over $\ep^{-2}$ steps to
$\ep^{-2} \ep \delta_\ep = \ep^{-1} \delta_\ep$. For a nontrivial result in the limit
$\ep \rightarrow 0$, we evidently need $\delta_\ep$ to be linear in $\ep$. (Otherwise the accumulated bias
would dominate and the final-time loss function would be evaluated near $-\infty$ or near $+ \infty$.)
Since $\ep = 1/\sqrt{T}$, this justifies once again why $\delta$ must be proportional to
$1/\sqrt{T}$.

We remark that the minimax loss $\la^d_\delta$ and its scaled version $\la^\ep$ are non-increasing
functions of each ``spatial'' variable ($s_i$ for the former, $\sigma_i$ for the latter) at each time.
This is easily proved by backward induction in time, using the assumption that the final-time loss $L_N$ has this
property. Thus if $\la^\ep$ is differentiable then $\partial_i \la^\ep \leq 0$ for each $i$.

\subsection{The PDE Assuming \texorpdfstring{${\Z}_N$ Contains a Neighborhood of Origin}{interval}}\label{subsec:7.2}

We suppose now that the set of possible moves $\Z_N$ contains a neighborhood of the origin in $\R^N$.
This discussion generalizes what we did earlier in the paper for $\Z_N = [-1,+1]^N$. We shall Taylor-expand
the function $\la^\ep$, ignoring the possibility that it might not be smooth, and assuming that the
quantities we consider have limits as $\ep \rightarrow 0$. This is, of course, purely formal, however
analogous arguments are known to give correct results for many optimal control problems.

Substituting
\[
\la^\ep (\boldsigma+\ep\z,\tau+\ep^2) = \la^\ep (\boldsigma,\tau)+\ep\nabla\la^\ep(\boldsigma, \tau)\cdot\z +\ep^2(\partial_\tau\la^\ep(\boldsigma,\tau)+\frac{1}{2}\z^\top D^2\la^\ep(\boldsigma, \tau)\z)+O(\ep^3)
\]
into \eqref{eqn:DPP for scaled game} and dividing by $\ep$ gives
\begin{equation}\label{eqn:Taylor expansion of DPP}
0=\min_\p\max_{\z\in S_{\delta_\ep}(\p)}
\biggl(\nabla\la^\ep(\boldsigma, \tau) \cdot\z + \ep\bigl(\partial_\tau\la^\ep(\boldsigma,\tau)+
\frac{1}{2}\z^\top D^2\la^\ep(\boldsigma, \tau)\z\bigr)+O(\ep^2)\biggr).
\end{equation}

The leading order term is $\nabla\la^\ep(\boldsigma, \tau) \cdot\z$. Since $\ep$ is tending to $0$, this term
dominates both players' considerations. It is convenient to write $z_i = z_i' + \delta_\ep$, and to note
that $\z \in \Z_N$ is equivalent to $\z' \in {\Z}'_N = \Z_N - \delta_\ep \bm{1}$. Since
$$
\nabla\la^\ep \cdot\z = \delta_\ep \sum_{i=1}^N \partial_i \la^\ep + \nabla\la^\ep \cdot {\z}'
$$
and the first term on the right is independent of both $\p$ and $\z$, the leading-order min-max
reduces to
\begin{equation} \label{eqn:leading-order-min-max}
\min_\p \max_{\z' \in {\Z}_N', \, \p \cdot {\z}' \geq 0} \nabla \la^\ep(\boldsigma, \tau) \cdot {\z}'.
\end{equation}
We show in Appendix \ref{Appendix: PDE derivation} that the value of this
min-max is $0$, and it is achieved only when $\p$ is proportional to $-\nabla \la$ and
${\z}' \in  {\Z}'_N$ satisfies the additional condition $\z' \bot \nabla \la$.
The limiting PDE is therefore provided by the order-$\eps$ part of
\eqref{eqn:Taylor expansion of DPP}. Remembering that
$\delta_\ep = \sqrt{2\gamma} \ep$ and that ${\Z}'_N \rightarrow {\Z}_N$ as $\ep \rightarrow 0$, 
we conclude (heuristically) that $\lim_{\ep \rightarrow 0} \la$ should solve
\begin{align}\label{eqn: PDE for drifting game}
\begin{cases}
\partial_\tau\la(\boldsigma,\tau)+ \sqrt{2 \gamma} \sum_{i=1}^N \partial_i \la(\boldsigma,\tau) +
\frac{1}{2} \, \underset{\nabla\la(\boldsigma,\tau)\perp\z, \, \z\in\mathcal{Z}_N}{\max}\z^\top D^2\la(\boldsigma, \tau)\z=0\\
\la(\boldsigma,0)= L_N(\boldsigma)
\end{cases}
\end{align}
The first-order term $\sqrt{2 \gamma} \sum_{i=1}^N \partial_i \la(\boldsigma,\tau)$ can be eliminated by changing
variables from $(\boldsigma,\tau)$ to $({\boldsigma}', \tau)$ with $\sigma_i' = \sigma_i + \sqrt{2\gamma} \tau$.
The optimal $\z$ for
(\ref{eqn: PDE for drifting game}) cannot necessarily be used at finite $\ep$, since ${\Z}_N'$ is
slightly different from $\Z_N$. Thus our situation is slightly different from the prediction
with expert advice problem considered in \cite{drenska2019prediction}, where the asymptotically optimal adversary strategy is admissible at finite $\ep$. (We remark in passing that for small numbers of experts, asymptotically optimal strategies for \emph{prediction with expert advice} are in fact known explicitly
\cite{bayraktar19a, bayraktar19b, pmlr-v107-kobzar20a, pmlr-v125-kobzar20a}.)

The PDE (\ref{eqn: PDE for drifting game}) is highly nonlinear due to the maximization in $\z$.
When ${\Z}_N = [-1,1]^N$ it is natural to ask whether its solution has the form
$\la(\boldsigma, \tau) = \frac{1}{N} \sum_{i=1}^N f(\sigma_i,\tau)$ where $f$ solves
$\partial_\tau f + \sqrt{2 \gamma} f' + \frac{1}{2} \max \{f'', 0\} = 0$. The answer appears to be no:
to get this separable solution, one would need to replace the maximization over $\z$ in the
second-order term by $\underset{\z\in\mathcal{Z}_N}{\max}\z^\top D^2\la(\boldsigma, \tau)\z $ (changing the
equation, and therefore presumably its solution). Evidently: when $L_N(\boldsigma)=\frac{1}{N}\sum_{i=1}^N\ind{\sigma_i \leq 0}$, the present
discussion reduces to Equation \eqref{eqn: backward nonlinear equation} (up to change of variable) \emph{at best} in the
limit $N \rightarrow \infty$. (Lemma \ref{lma: combinatoric lemma} suggests that ignoring the constraint
$\z' \bot \nabla \la$ makes very little difference when $N$ is large enough).

\subsection{An Upper Bound Potential}\label{subsec:7.3}
When $\Z_N$ does not contain a neighborhood of the origin, one can begin as in the previous subsection,
but the optimal value of the leading-order min-max \eqref{eqn:leading-order-min-max} is unlikely to be
$0$. (The probabilistic argument used for our lower bound in Appendix \ref{Appendix: discrete action set} suggests
that it should approach $0$ in the limit as $N \rightarrow \infty$; however, to discuss an asymptotic PDE
we must hold the value of $N$ fixed.)

It is natural to ask whether our PDE-based approach can nevertheless be useful in this setting. We argue in
this subsection that it can be used to design a good potential for the player. The key point is that if
the player chooses $\p$ to be a multiple of $-\nabla \la^\ep$ then
\begin{equation} \label{eqn:key-point-section-7.3}
\max_{\z' \in {\Z}_N', \, \p \cdot {\z}' \geq 0} \nabla \la^\ep(\boldsigma, \tau) \cdot {\z}' \leq 0.
\end{equation}
While the optimal $\p$ might be better -- it might make the value of
\eqref{eqn:leading-order-min-max} negative -- the (heuristic) argument of the
previous subsection combined with \eqref{eqn:key-point-section-7.3} suggests that
$\la = \lim_{\ep \rightarrow 0} \la^\ep$ (if it exists) should satisfy

\begin{align}\label{ineq:subsolution inequality}
\begin{cases}
\partial_\tau \la(\boldsigma,\tau)+ \sqrt{2 \gamma} \sum_{i=1}^N \partial_i \la(\boldsigma,\tau) +
\frac{1}{2}\underset{\z \in\mathcal{Z}_N}{\max}\z^\top D^2\la(\boldsigma, \tau)\z \geq 0\\
\la(\boldsigma,0)= L_N(\boldsigma) .
\end{cases}
\end{align}

This insight can be used by the player as follows: a function $\la_U(\s, t)$ satisfying the opposite inequality 
\begin{align}\label{eqn:equation for upper bound potential}
\begin{cases}
\partial_\tau\la(\boldsigma,\tau)+ \sqrt{2 \gamma} \sum_{i=1}^N \partial_i \la(\boldsigma,\tau)
\frac{1}{2}\max_{\z\in {\Z}_N} \z^\top D^2\la(\boldsigma, \tau)\z \leq 0\\
\la(\boldsigma,0)= L_N(\boldsigma) .
\end{cases}
\end{align}
provides a good player potential. In particular, our upper-bound arguments seem to apply (at least formally) for such $\la_U$; moreover, the comparison principle (which holds for such parabolic PDEs) shows that solution of \eqref{ineq:subsolution inequality} and \eqref{eqn:equation for upper bound potential} must satisfy $\la(0, -T) \leq \la_U(0, -T)$.

The best upper bound (the smallest $\la_U$) should solve \eqref{eqn:equation for upper bound potential} with the inequality replaced by equality. This PDE is nonlinear, in general since it involves a maximization over $\z$. However in the separable case $z_N = \{ \pm 1\}^N$ it is easy to see that
$\la(\boldsigma,\tau) = \frac{1}{N}\sum_{i=1}^N f(\sigma_i,\tau)$ where $f$ solves $\partial_\tau f + \sqrt{2 \gamma} f' +
\frac{1}{2} f'' = 0$. When the first-order term is eliminated by the change of
variables $\sigma_i' = \sigma_i + \sqrt{2 \gamma}\tau$ and $L_N(\boldsigma) = \frac{1}{N}\sum_{i=1}^N\ind{\sigma_i \leq 0}$, this reduces to the linear heat equation whose solution
we used to design our potentials in Section \ref{Sec: discrete action set}.

Our analysis of the separable case ${\Z}_N = \{ \pm 1 \}^N$ in Section \ref{Sec: discrete action set}
used a probabilistic argument to see that the leading-order min-max
\eqref{eqn:leading-order-min-max} is very close to $0$ when $N$ is sufficiently large. While that discussion was
limited to ${\Z}_N = \{ \pm 1 \}^N$, we suppose a similar argument could be used
for other choices of ${\Z}_N$. 

\section{The leading-order min-max in Section \ref{subsec:7.2}}
\label{Appendix: PDE derivation}

We want to show that for any nonzero $\boldxi \in \R^N$ with non-positive components,
and any bounded $A \subset \R^N$ containing
a neighborhood of the origin,
\begin{equation} \label{eqn:goal-of-appendixG}
\min_{\p \in \Delta_N} \max_{\z \in A, \, \p \cdot \z \geq 0} \boldxi \cdot \z = 0,
\end{equation}
and this value is achieved only when $\p = - \boldxi/\| \boldxi \|_1$ and $\p \cdot \z = 0$.
(This assertion was used in Section \ref{subsec:7.2} with
$A = {\Z}_N - \delta_\ep {\mathbf 1}$ and $\boldxi = \nabla \la^\ep$; see Equation
\eqref{eqn:leading-order-min-max} and the text just after it.)

We first prove the following geometric lemma.
\begin{lemma}\label{lma:geometric lemma}
Suppose $\mathbf{a}, \mathbf{b}\in\mathbb{R}^N$ are non-zero vectors and only have non-negative
components, moreover if they are not parallel, then there exists a vector $\mathbf{v}$ such that
$\mathbf{a}\cdot\mathbf{v}>0$ and $\mathbf{b}\cdot\mathbf{v}<0$.
\end{lemma}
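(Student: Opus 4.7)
The plan is to look for $\mathbf{v}$ within the two-dimensional subspace spanned by $\mathbf{a}$ and $\mathbf{b}$, specifically of the form $\mathbf{v} = \alpha \mathbf{a} - \beta \mathbf{b}$ with $\alpha, \beta > 0$ to be chosen. A direct computation gives
\[
\mathbf{a}\cdot\mathbf{v} = \alpha\|\mathbf{a}\|^2 - \beta(\mathbf{a}\cdot\mathbf{b}) \qquad \text{and} \qquad \mathbf{b}\cdot\mathbf{v} = \alpha(\mathbf{a}\cdot\mathbf{b}) - \beta\|\mathbf{b}\|^2,
\]
so the task reduces to choosing a ratio $\alpha/\beta$ that makes the first expression positive and the second negative. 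Note that the non-negativity hypothesis ensures $\mathbf{a}\cdot\mathbf{b}\geq 0$, which is what makes this parametrization a natural ansatz.

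First I would dispose of the degenerate case $\mathbf{a}\cdot\mathbf{b}=0$: here the two inequalities decouple, and $\alpha=\beta=1$ (equivalently $\mathbf{v} = \mathbf{a}-\mathbf{b}$) gives $\mathbf{a}\cdot\mathbf{v} = \|\mathbf{a}\|^2 > 0$ and $\mathbf{b}\cdot\mathbf{v} = -\|\mathbf{b}\|^2 < 0$, using only that $\mathbf{a}$ and $\mathbf{b}$ are nonzero. In the remaining case $\mathbf{a}\cdot\mathbf{b} > 0$, the two conditions become
\[
\frac{\mathbf{a}\cdot\mathbf{b}}{\|\mathbf{a}\|^2} \;<\; \frac{\alpha}{\beta} \;<\; \frac{\|\mathbf{b}\|^2}{\mathbf{a}\cdot\mathbf{b}},
\]
and this open interval is non-empty precisely when $(\mathbf{a}\cdot\mathbf{b})^2 < \|\mathbf{a}\|^2\|\mathbf{b}\|^2$. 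This is the strict form of the Cauchy--Schwarz inequality, which holds exactly when $\mathbf{a}$ and $\mathbf{b}$ are not parallel --- the hypothesis of the lemma. Picking any $\alpha/\beta$ in this interval (for concreteness, the geometric mean of the endpoints) completes the construction.

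I do not expect a serious obstacle. The argument reduces by projection to two-dimensional linear algebra, and the only non-routine ingredient is the strictness of Cauchy--Schwarz for non-parallel vectors. I note that the non-negativity of the components of $\mathbf{a}$ and $\mathbf{b}$ is not strictly needed for the argument above, though it is natural in the setting where the lemma will be applied in Appendix \ref{Appendix: PDE derivation} (with $\mathbf{a}$ a probability distribution and $\mathbf{b} = -\nabla\la$, which is componentwise non-negative since $\la$ is non-increasing in each spatial variable).
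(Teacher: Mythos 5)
Your argument is correct and is essentially the paper's proof, up to a trivial reparametrization: the paper takes $\mathbf{v}=\mu\mathbf{a}-\mathbf{b}$ and chooses $\mu>0$, which is your $\alpha/\beta$, and it likewise splits on $\mathbf{a}\cdot\mathbf{b}=0$ versus $\mathbf{a}\cdot\mathbf{b}>0$ and invokes strict Cauchy--Schwarz for non-parallel vectors. Your closing remark that the non-negativity of the components is not actually needed is accurate and a small improvement in clarity, though the paper only states the lemma in the form it uses.
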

\begin{proof}[Proof of lemma \ref{lma:geometric lemma}]
We assume $\mathbf{v}=\mu\mathbf{a}-\mathbf{b}$, $\mu>0$. To satisfy $\mathbf{a}\cdot\mathbf{v}>0$ and
$\mathbf{b}\cdot\mathbf{v}<0$, $\mu$ must be such that
\begin{align*}
\begin{cases}
\mathbf{a}\cdot\mathbf{b}<\mu\|a\|_2^2\\
\mathbf{a}\cdot\mathbf{b}<\frac{1}{\mu}\|b\|_2^2.
\end{cases}
\end{align*}
If $\mathbf{a}\cdot\mathbf{b}=0$ then the above inequalities hold for any $\mu>0$.
For the case of $\mathbf{a}\cdot\mathbf{b}>0$, since $\mathbf{a}$ and $\mathbf{b}$ are not parallel,
\[
(\mathbf{a}\cdot\mathbf{b})^2<\|a\|_2^2\|b\|_2^2.
\]
Set $\mu=\mu_0=\frac{\mathbf{a}\cdot\mathbf{b}}{\|a\|_2^2}>0$, we have
\begin{align*}
\begin{cases}
\mathbf{a}\cdot\mathbf{b}=\mu\|a\|_2^2\\
\mathbf{a}\cdot\mathbf{b}<\frac{1}{\mu}\|b\|_2^2.
\end{cases}
\end{align*}
Thus setting $\mu$ to be slightly larger than $\mu_0$ will meet the constraints.
\end{proof}

Turning now to \eqref{eqn:goal-of-appendixG}, consider first what happens if
$\p\in\Delta_N$ and $-\boldxi$ are not parallel. Then by the Lemma,
there exists a vector $\v$ such that
\begin{align*}
    \begin{cases}
    \p\cdot\v>0\\
    \boldxi \cdot \v > 0.
    \end{cases}
\end{align*}
Replacing $\v$ by $\lambda \v$ for $\lambda > 0$ leaves the conclusion unchanged. Since $A$
contains a neighborhood of the origin, we conclude if $\p$ and $-\boldxi $ are not parallel, then
\[
\max_{\z \in A, \, \p \cdot \z \geq 0} \nabla \boldxi \cdot \z > 0.
\]
On the other hand, if $\p$ is parallel to $-\boldxi$, i.e.
$\p=-\boldxi/ \| \boldxi \|$, then it's clear that
\[
\max_{\z \in A, \, \p \cdot \z \geq 0} \nabla \boldxi \cdot \z = 0,
\]
and equality is obtained exactly when $\p\cdot \z = 0$. (There actually exists such $\z$, since
by $A$ contains a neighborhood of the origin.) This completes the verification of our assertion.

\end{document}